\definecolor{xblue}{HTML}{4169E1}
\definecolor{xgreen}{HTML}{036C3A}
\definecolor{xpurple}{HTML}{9838B1}
\definecolor{xslategray}{HTML}{70818F}
\definecolor{xorange}{HTML}{FF8C00}
\definecolor{xcyan}{HTML}{06AEEF}
\definecolor{xred}{HTML}{FF0000}
\definecolor{xgray}{HTML}{808080}
\definecolor{xxgreen}{HTML}{009F86}
\definecolor{xsienna}{HTML}{8B4512}
\definecolor{xxgreen}{HTML}{009F86}
\definecolor{xxpurple}{HTML}{623E99}
\newcommand{\xblue}[1]{\textcolor{xblue}{#1}}
\newcommand{\xxpurple}[1]{\textcolor{xxpurple}{#1}}
\newcommand{\xxgreen}[1]{\textcolor{xxgreen}{#1}}
\newcommand{\xred}[1]{\textcolor{xred}{#1}}
\newcommand{\coloredhl}[2]{{\textcolor{#1}{\sethlcolor{#1!10}\hl{#2}}}\xspace}
\newcommand{\coloredul}[2]{\textcolor{#1}{\underline{#2}}\xspace}
\newcommand{\coloreddashul}[2]{\textcolor{#1}{\dashuline{#2}}\xspace}
\newcommand{\mask}{\mathbf{m}}
\def\rvx{{\mathbf{x}}}
\def\rvy{{\mathbf{y}}}
\def\rvz{{\mathbf{z}}}
\def\rvw{{\mathrm{w}}}
\def\rvs{{\mathbf{s}}}
\theoremstyle{plain}
\newtheorem{theorem}{Theorem}[section]
\newtheorem{proposition}[theorem]{Proposition}
\newtheorem*{corollary*}{Corollary} 
\theoremstyle{definition}
\theoremstyle{remark}
\newcommand{\PRISM}{{PRISM }}
\newcommand{\rebuttal}[1]{#1}
\icmltitlerunning{Fine-Tuning Masked Diffusion for Provable Self-Correction}
\begin{document}

\twocolumn[
  \icmltitle{Fine-Tuning Masked Diffusion for Provable Self-Correction}



  \icmlsetsymbol{equal}{*}

  \begin{icmlauthorlist}
    \icmlauthor{Jaeyeon Kim}{equal,yyy} 
    \icmlauthor{Seunggeun Kim}{equal,comp}
    \icmlauthor{Taekyun Lee}{equal,comp}
    \icmlauthor{David Z. Pan}{comp}
    \icmlauthor{Hyeji Kim}{comp}
     \icmlauthor{Sham Kakade}{yyy,zzz}
     \icmlauthor{Sitan Chen}{yyy}

  \end{icmlauthorlist}

  \icmlaffiliation{yyy}{Harvard University}
  \icmlaffiliation{comp}{The University of Texas at Austin}
  \icmlaffiliation{zzz}{Kempner Institute}
  \icmlcorrespondingauthor{Jaeyeon Kim}{\texttt{jaeyeon\_kim@g.harvard.edu}}

  \icmlkeywords{Machine Learning, ICML}

  \vskip 0.3in
]




\printAffiliationsAndNotice{%
\textsuperscript{*} Jaeyeon Kim and Seunggeun Kim contributed equally; Taekyun Lee is also a co--first author.%
}

\begin{abstract}
A natural desideratum for generative models is \emph{self-correction}--detecting and revising low-quality tokens at inference. While Masked Diffusion Models (MDMs) have emerged as a promising approach for generative modeling in discrete spaces, their capacity for self-correction remains poorly understood. Prior attempts to incorporate self-correction into MDMs either require overhauling MDM architectures/training or rely on imprecise proxies for token quality, limiting their applicability. Motivated by this, we introduce PRISM--{\bf P}lug-in {\bf R}emasking for {\bf I}nference-time {\bf S}elf-correction of {\bf M}asked Diffusions--a lightweight, model-agnostic approach that applies to any pretrained MDM. Theoretically, PRISM defines a self-correction loss that provably learns per-token quality scores, without RL or a verifier. These quality scores are computed in the same forward pass with MDM and used to detect low-quality tokens. Empirically, PRISM advances MDM inference across domains and scales: Sudoku; unconditional text (170M); and code with LLaDA (8B). We open-source our codebase in \href{https://github.com/SeunggeunKimkr/PRISM}{(Link)}.
\end{abstract}

\section{Introduction}
Masked Diffusion Models (MDMs)~\citep{sahoo2024simple,gat2024discrete,shi2024simplified}
have emerged as a promising approach for generative modeling in discrete domains. At inference time, they start from a fully masked sequence and gradually unmask the masked tokens in arbitrary order to obtain a clean sequence. This flexibility in generation order, combined with recent approaches to deployment at scale~\citep{nie2025large,dream2025,gemini2025diffusion,labs2025mercury,gong2025diffucoder}, has enabled them to surpass their autoregressive counterparts on several downstream tasks, such as reasoning, coding, and planning.

\begin{figure*}[t]
    \centering
    \includegraphics[width=1.0\linewidth]{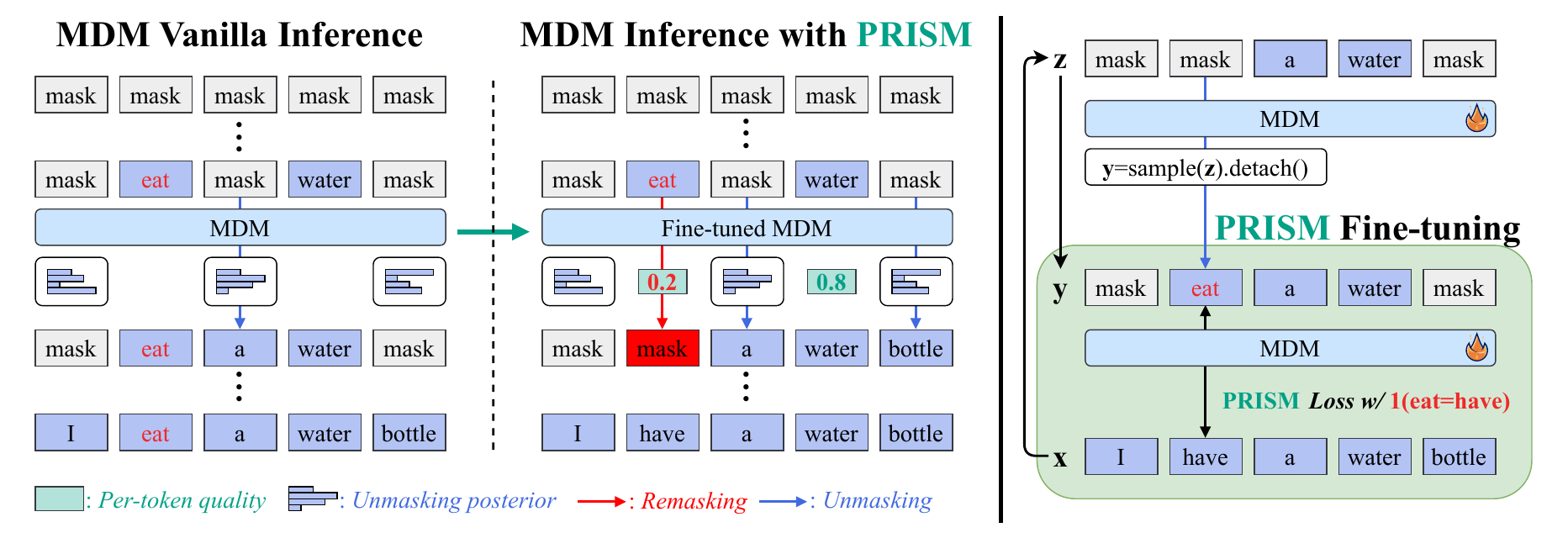}
    \caption{\textbf{PRISM}. (\textbf{Left}) MDM learns \xblue{\emph{unmasking posterior}} to unmask tokens, after which they remain fixed. (\textbf{Middle}) PRISM fine-tuning introduces \xxgreen{\emph{per-token quality}}, which is used to detect incorrect tokens and \xred{\emph{remask}} them. At inference, the fine-tuned MDM jointly computes \xblue{\emph{unmasking posterior}} and \xxgreen{\emph{per-token quality}}, respectively performing unmasking and remasking. (\textbf{Right}) PRISM training pair generation: sample a masked sequence $\rvz$ from $\rvx$; from $\rvz$, unmask a chosen position using the pretrained MDM to obtain $\rvy$.}
    \label{fig:main}
   \vspace{-0.15in}
\end{figure*} 

However, the standard MDM design lacks the ability to self-correct--\emph{a natural desideratum for generative models}---that is, identifying low-quality or incorrect tokens and correcting them at inference. This limitation is particularly crucial in MDMs due to their parallel, few step inference nature. Parallelized inference with only a few steps makes MDMs more susceptible to dependency errors across positions~\citep{liu2024discrete,xu2024energy,kang2025parallelbench}, as they model per-position unmasking posteriors rather than the full joint distribution over positions.

Several recent works have attempted to overcome this limitation in MDMs~\citep{campbell2022continuous,lezama2022discrete,zhao2024informed,wang2025remasking,von2025generalized,peng2025pathplanningmaskeddiffusion}. However, these approaches come with one of two drawbacks: they either \textbf{(1)} indirectly or inaccurately learn a notion of token quality and are potentially inefficient to evaluate or \textbf{(2)} entirely change the MDM framework or architecture, limiting their applicability.

\textbf{Contribution.} In this work, we propose a simple, principled solution addressing these limitations, \xblue{\textbf{PRISM}}: \xblue{\textbf{P}}lug-in \xblue{\textbf{R}}emasking for \xblue{\textbf{I}}nference-time \xblue{\textbf{S}}elf-correction of \xblue{\textbf{M}}asked diffusions, a plug-and-play fine-tuning framework that easily adapts to any pretrained MDM (addressing {\textbf{(1)}}) and directly learns the clean token's quality given a partially masked sequence (addressing {\textbf{(2)}}).

\xblue{\textbf{Theoretically}}, PRISM introduces a novel self-correction loss that \emph{provably} learns the per-token quality scores. It attaches a lightweight adapter to any pretrained MDM and fine-tunes with this loss; at inference, the learned scores detect low-quality tokens to \emph{remask} and potentially be revised in subsequent inference steps (Figure~\ref{fig:main}, right). \xblue{\textbf{Practically}}, PRISM improves MDM inference across a range of domains and scales: Sudoku, unconditional text generation with a 170M MDM, and code generation with LLaDA~\citep{nie2025large}, an open-source 8B MDM. Notably, on LLaDA, with $<500$ GPU-hours of fine-tuning, it acquires the ability to self-correct, outperforming baselines on MBPP~\cite{austin2021program} and HumanEval~\cite{chen2021evaluating}.

\textbf{Organization.} We begin by reviewing MDMs in Section~\ref{sec:prelim_mdm} and previous work on equipping MDMs with self-correction ability in Section~\ref{sec:prior_work_main}. We introduce PRISM in Section~\ref{sec:prism} with its theoretical foundation and validate its empirical ability in Section~\ref{sec:experiment}.

\textbf{Concurrent work.}
\citet{meshchaninov2025guided,li2025riv,schiff2026learn}, while differing in technical details, also address the self-correction problem for MDMs by either training an additional module or fine-tuning the MDM backbone that takes $\hat{\rvx}$ as input, thus falling under the class of proxy-$\hat{\rvx}$ approaches described in Section~\ref{sec:prior_work_main}. Most closely related, \citep{huang2025don} shares the motivation and experimental setup. More recently, \citet{bie2026llada2} trains large-scale MDMs at 16B and 100B parameters. Although the details are not disclosed, their codebase suggests that they learn a per-position distribution rather than a scalar. We return to this discussion in Section~\ref{sec:conclusion}.\footnote{This paragraph has been included since the second version of this manuscript.}

\textbf{Conflict of Interest Disclosure.}
The authors declare no financial conflicts of interest.

\section{Preliminaries}
\subsection{Masked Diffusion Models} \label{sec:prelim_mdm}
\textbf{Notation.} Assume we aim to learn the distribution over sequences of length $L$ with vocabulary  $\mathcal{V}$, namely the true distribution $\rvx\sim p_{\mathrm{data}}$. $\rvx^i$ denotes the $i$-th element of a given sequence $\rvx=(\rvx^1, \dots, \rvx^L)$ and $\Delta(\mathcal{V})$ indicates the simplex of probability distributions over $\mathcal{V}$.

\textbf{MDM training.} Although MDMs can be interpreted from several viewpoints, we adopt an \emph{any-order} language model viewpoint~\citep{ou2024your,zheng2024masked}, which is essential to understanding our \PRISM framework. Roughly speaking, MDM learns the posterior marginals over clean tokens in each masked position, conditioned on a partially masked sequence. We now formalize how this objective is defined and trained. To \emph{learn} the posterior, MDM performs a \emph{masking process} using an auxiliary mask token $\mask$. For a given $\rvx\sim p_{\mathrm{data}}$, this masking process samples a partially masked sequence $\rvz\in(\mathcal{V}\cup\{\mask\})^L$ defined in two equivalent ways (see Appendix~\ref{appendix: marginal equality} for equivalence):
\begin{itemize}[leftmargin=1.4em,itemsep=0.2pt] 
\item[(1)] Draw $n\sim\mathrm{Unif}\{0,\dots,L\}$ and replace the tokens at uniformly selected $n$ indices in $\rvx$ with $\mask$.
\item[(2)] Draw $t\sim\mathrm{Unif}[0,1]$ and for each $i$, replace $\rvx^i \in\mathcal{V}$ with $\mask$ with probability $t$.
\end{itemize}
To clarify, in (1), for a given $n$, we uniformly sample an index set from all size-$n$ subsets of $\{1,\dots,L\}$, and mask corresponding positions of $\rvx$. This masking process defines a random variable $\rvz$ conditioned on $\rvx$, and marginalizing it over $\rvx\sim p_{\mathrm{data}}$ \coloredhl{xblue}{induces a joint distribution over $(\rvx,\rvz)$}. 

We refer to the resulting coordinate-wise posterior conditioned on $\rvz$ as the \coloredhl{xblue}{\emph{unmasking posterior}}; $p(\rvx^i=\cdot\,|\,\rvz)$, i.e., the distribution over clean tokens at position $i$ given a partially masked sequence $\rvz$. This unmasking posterior is the core modeling component in MDM and is parameterized by a neural network $f_\theta$, which takes $\rvz$ as an input and outputs a tensor of shape $|\mathcal{V}|\times L$. Its $i$-th column, $f_\theta^i(\cdot\,|\,\rvz)\in\Delta(\mathcal{V})$, models the unmasking posterior $f_\theta^i(v\,|\,\rvz) \approx p(\rvx^i=v\,|\,\rvz)$. To train $f_\theta$, MDM minimizes the following loss:
\begin{equation} \label{eq:unmasking_loss}
    \mathcal{L}(\theta)\colon =  \mathbb{E}_{\rvx,\rvz}\left[\frac{1}{n}\sum_{i\colon \rvz^i=\mask} -\log f_\theta^i(\rvx^i \,|\,\rvz) \right],
\end{equation}
which is the cross-entropy loss summed over all masked indices. Here, the expectation is taken over the joint distribution of $(\rvx,\rvz)$ defined above. As desired, the unique minimizer $f_{\theta^\star}$ of the loss $\mathcal{L}_\theta$ satisfies $f_{\theta^\star}^i(v\,|\,\rvz)= p(\rvx^i=v\,|\, \rvz)$. This connects the MDM training loss to any-order language models such as BERT~\citep{devlin2019bert}, which model the posterior marginals at all masked positions. We note that the training loss~\eqref{eq:unmasking_loss} allows other equivalent formulations, as developed in previous MDM works~\citep{nie2025large,sahoo2024simple,shi2024simplified}.

\textbf{MDM inference.} MDM inference starts from a length-$L$ masked sequence $\rvx_1 = (\mask,\dots,\mask)$ and proceeds over a monotonically decreasing sequence of times $t_0=1>\dots>t_N = 0$. At each step $t_\ell$, given partially masked sequence $\rvx_{t_\ell}\in(\mathcal{V}\cup \{\mask\})^L$, we proceed by two steps to obtain $\rvx_{t_{\ell+1}}$:
\begin{itemize}[leftmargin=1.4em,itemsep=0.2pt]
\item[(a)] Choose a subset of masked tokens $\mathcal{S}$ to \coloredhl{xblue}{unmask}, $\mathcal{S}\subseteq\{i\,|\,\rvx_{t_\ell}^i=\mask\}$.
\item[(b)] For each $i \in \mathcal{S}$, \coloredhl{xblue}{unmask} $\rvx_{t_\ell}^i$ to a clean token sampled from $f_\theta^i(\cdot\,|\, \rvx_{t_\ell})\in\Delta(\mathcal{V})$.
\end{itemize}
Notably, step (a)—the choice of $\mathcal{S}$—is highly flexible and central to MDM’s gains on downstream tasks; $\mathcal{S}$ may be chosen arbitrarily. Two predominant strategies are independently including each masked index $\rvx_{t_\ell}^i=\mask$ with a fixed probability~\citep{sahoo2024simple, shi2024simplified} or adaptively selecting indices, e.g., the most confident tokens or the leftmost tokens~\citep{chang2022maskgit,zheng2023reparameterized,kim2025train,nie2025large}.

\subsection{Self-correction for MDM via Per-token Quality} \label{sec:prior_work_main}

An ideal generative model would detect low-quality tokens at inference time and correct them when appropriate. Prior work has explored self-correction in LLMs, including \cite{kumar2024training,gou2023critic}, and, recently, in MDMs. Despite varying instantiations, these methods for MDM can be organized into a two-step recipe. At each inference step, given $\rvx_{t_\ell}$:
\vspace{-0.05in}
\begin{itemize}[leftmargin=1.4em,itemsep=0.2pt]
    \item[(a)] Compute a per-token ``quality score'' for each $\rvx_{t_\ell}^i\ne \mask$ and, according to it, select a subset of clean tokens $\mathcal{T}$ to \coloredhl{xxgreen}{remask}.
    \item[(b)] For each $i\in\mathcal{T}$, \coloredhl{xxgreen}{remask} $\rvx_{t_\ell}^i$ and either replace it with a new clean or leave it masked.
\end{itemize}
\vspace{-0.05in}
At step (a), the set $\mathcal{T}$ is typically chosen with the lowest per-token quality, optionally with stochasticity. At step (b), the remasked $\rvx_{t_\ell}^i$ can be immediately resampled (using the pretrained model or an auxiliary module) or kept to engage further for future unmasking/remasking steps. Combined with the standard MDM unmasking step (Section~\ref{sec:prelim_mdm}), inference alternates between unmasking and (re)masking; these may be interleaved or staged, depending on the method.

\textbf{Challenge.} 
The key challenges are twofold: formulating a \coloredhl{xxgreen}{\textbf{principled definition of per-token quality}} and \coloredhl{xxgreen}{\textbf{constructing a model}} that estimates it. These can be formalized as follows: A per-token quality is a function $g_\star$ that takes any partially masked sequence $\rvy$ and returns position-wise scalars for non-masked positions $\rvy^i\ne\mask$: $g_\star\colon (\mathcal{V}\cup\{\mask\})^L \to [0,1]^L$,
\begin{equation*}
\xxgreen{g^i_\star(\rvy) \approx (\text{Per-token quality})}=(\text{\emph{how likely} $\rvy^i$ is given $\rvy$}).
\end{equation*}
The phrase ``how likely'' is a modeling choice; whatever the choice, it should align with the intended notion of per-token quality (our \PRISM instantiation is given in Section~\ref{sec:prism}). The goal is a computationally efficient algorithm for modeling $g_\star$ that, in the limit, provably learns the chosen definition of per-token quality. As we argue next, prior work typically lacks either a precise, testable target of per-token quality or entirely changes the MDM design itself, limiting its applicability. We therefore organize the prior work along whether it specifies a precise target for per-token quality on the state during inference $\rvx_{t_j}$ and how disruptive it is to the MDM design. 

\begin{itemize}[leftmargin=10pt, noitemsep, parsep=3pt, topsep=0pt]
    \item Training-free approaches: DFM~\citep{gat2024discrete}, ReMDM~\citep{wang2025remasking}, P2-Self~\citep{peng2025pathplanningmaskeddiffusion} are drop-in and can be applied to any pretrained MDM, however they lack a precise target: scores are random (DFM, ReMDM), evaluated on a different input than $\rvx_{t_\ell}$ (ReMDM-conf), or inferred from logits that were not trained to present the token quality (P2-Self).
    \item Methods that require rehauling MDM design:  GIDD~\citep{von2025generalized}, Informed Corrector~\citep{zhao2024informed}, indeed articulate a valid target for self-correction but achieve it by changing the pretraining design or architecture, requiring full retraining. 
    \item Proxy-$\hat{\rvx}$ approaches: Token Critic~\citep{gou2023critic} and P2-Train~\citep{peng2025pathplanningmaskeddiffusion} train an external scorer to approximate the marginal likelihoods of a sampled clean sequence $\hat{\rvx}$. For a given $\rvx_{t_\ell}$, it first samples a reconstructed clean sequence $\hat{\rvx}$ via $f_\theta$ in a single step, then calculates the quality score via the trained module. Since the pretrained MDM has learned the \emph{per-token} unmasking posterior, not the \emph{joint posterior} across positions, $\hat{\rvx}$ can deviate a lot from a valid sample from the posterior. This method also requires an extra forward pass for scoring.
\end{itemize}
We provide a comprehensive explanation of the prior work and their notions of per-token quality in Appendix~\ref{appendix:prior_work}. \emph{In short}, prior work either lacks a precise target of per-token quality or alters the design of the MDM itself. \emph{In contrast}, PRISM assigns an explicit per-token target to a given sequence $\rvx_{t_\ell}$ and estimates it in a single forward pass by fine-tuning the pretrained MDM, while still preserving the MDM’s ability to compute per-token posteriors. 
\section{PRISM}
\label{sec:prism}
In this section, we introduce \xblue{P}lug-in \xblue{R}emasking for \xblue{I}nference-time \xblue{S}elf-correction of \xblue{M}asked diffusion models (\xblue{\textbf{PRISM}}). We substantiate its theoretical foundation in Section~\ref{sec:prism_theory}, present our empirical modeling design in Section~\ref{sec:prism_empirical}, and explain how PRISM is used during inference in Section~\ref{sec:prism_inference}. Recall the definition of per-token quality from Section~\ref{sec:prior_work_main}: $g_\star\colon (\mathcal{V}\cup\{\mask\})^L \to [0,1]^L$,
\begin{equation*}
\xxgreen{g^i_\star(\rvy) \approx (\text{Per-token quality})}=(\text{\emph{how likely} $\rvy^i$ is given $\rvy$}).
\end{equation*}
A natural candidate for the per-token quality is the ground-truth likelihood of the token $\rvy^i$ when $\rvy^i$ is masked out of $\rvy$. We denote this masked sequence by $\rvy\oplus \mask_i$, i.e., $(\rvy\oplus \mask_i)^i=\mask$ and $(\rvy\oplus \mask_i)^j=\rvy^j$ for $j\ne i$.
Our desired notion of \emph{per-token quality} is \coloredhl{xxgreen}{the likelihood of a given token conditioned on the rest of the sequence}. When $\rvy^i$  is consistent with the context $\rvy$, this likelihood should be high; otherwise, it should be low, marking the position as a candidate for remasking. Formally,
\begin{equation*}
    \xxgreen{g_\star^i(\rvy)} := p(\rvx^i = \rvy^i\,|\,\rvy\oplus \mask_i).
\end{equation*}
To clarify, $p(\rvx^i=\cdot \,|\,\cdot)$ is the same as the unmasking posterior used for MDM in Section~\ref{sec:prelim_mdm}.

\begin{algorithm*}[t]
\caption{{PRISM fine-tuning from a pretrained MDM}}
\begin{algorithmic}[1]
\STATE \textit{Require:} Pretrained MDM backbone \xblue{$f_\theta$}, per-token quality head \xxgreen{$g_\theta$}, true distribution $p_{\mathrm{data}}$, regularization constant $\lambda>0$, number of unmasking tokens $k\in \mathbb{N}$.
    \STATE Sample $\rvx \sim p_{data}$, $n$, and $\rvz$ from the masking process.
        \STATE \textcolor{gray}{\texttt{\# Sample} $\rvy$ \texttt{from} $\rvz$ \texttt{with} $f_{\mathrm{sg}(\theta)}$}
        \STATE Select a subset $\mathcal{S} \subseteq \{i \mid \rvz^i=\mask\}$ with $|\mathcal{S}|=k$.
        \STATE Obtain $\rvy$ by replacing $\rvz^i=\mask$ to  $v^i\sim \xred{f_{\mathrm{sg}(\theta)}^i(\cdot\,|\,\rvz)}$ for each $i\in\mathcal{S}$.
        \STATE \textcolor{gray}{\texttt{\# Calculate Loss}}
        \STATE $\mathcal{L}(\theta)\leftarrow \frac{1}{k}\sum_{i \in \mathcal{S}}{\mathrm{BCE}(\,\mathbf{1}[\rvx^i=\rvy^i]\, , \,{\xxgreen{g_\theta^i(\rvy)}}\,)}$  \RComment{\textcolor{gray}{PRISM Loss}}
        \STATE $\mathcal{L}(\theta)\leftarrow \mathcal{L}(\theta)+\lambda \times  \frac{1}{n} \sum_{j:\rvz^j=\mask}\xblue{-\log f_\theta^j(\rvx^j\,|\,\rvz)}$.  \RComment{\textcolor{gray}{ Regularization (MDM loss)}}
\end{algorithmic}
\label{alg: adapter fine-tuning}
\end{algorithm*}

\textbf{Discussion on our notion of per-token quality.} Since our per-token quality is derived from the posterior, it can be sensitive to uncertainty, e.g., in $\rvy=([\text{I}],\mask,[\text{to}],[\text{school}])$, although the fourth token $[\text{school}]$ is valid, $g_\star^{4}(\rvy)=p(\rvx^4=[\text{school}]\,|\,([\text{I}],\mask,[\text{to}],\mask))$ may be low since there are multiple candidates for that position. In such situations, it cannot ``hurt'' to remask/resample such tokens, as there is high entropy in the true posterior marginal anyway.

More importantly, our notion of per-token quality is effective at detecting \emph{incorrect} tokens in positions with low uncertainty, e.g., in $\rvy=([\text{I}],[\text{eat}],[\text{a}],[\text{apple}], [\text{strudel}])$, the third token $[\text{a}]$ is ungrammatical given $[\text{apple}]$, and $g_\star^{3}(\rvy)$ is sharply concentrated on other words like $[\text{an}]$,$[\text{the}]$; 
our quality score would correctly identify $[\text{a}]$ for remasking and revision.

\textbf{Challenge: Unmasking vs. Remasking training.} Our goal is to train (or obtain) a model $g_\phi$ that approximates $g_\star$. This is challenging because $g_\phi$ must take the sequence where the position $i$ is already unmasked, yet it must approximate the posterior defined with the \emph{masked} context $\rvy\oplus\mask_i$. In contrast, an MDM $f_\theta$ predicts from the masked position itself;
\vspace{-0.05in}
\begin{align*}
   &\xxgreen{g_\phi^i(\rvy)} \approx p(\rvx^i = \rvy^i\,|\,\rvy\oplus \mask_i)\in[0,1],\\ &\xblue{f_\theta^i(\cdot \,|\,\rvz)}\approx p(\rvx^i=\cdot\,|\,\rvz)\in\Delta(\mathcal{V}),
\end{align*}
Although $\xxgreen{g_\phi^i(\rvy)}$ and the $\xblue{f_\theta^i(\cdot\,|\,\rvy\oplus \mask_i)}$'s $\rvy^i$-th coordinate target the same quantity, they condition on different inputs. Consequently, $g_\phi$ cannot be simply trained or efficiently queried as in MDM.

A naive workaround is a distillation-based approach, supervising $\xxgreen{g_\phi^i(\rvy)}$ with the teacher signal $\xblue{f_\theta^i(\cdot\,|\,\rvy\oplus\mask_i)}$. This has two drawbacks: First, even in an infinite data and compute regime, we cannot achieve perfect $g_{\phi^\star}$ if the teacher model $f_\theta$ is imperfect. Second, it is data-inefficient: each training pair $(\rvy,i)$ requires a fresh evaluation of $f_\theta$ on $\rvy\oplus\mask_i$. Consequently, one forward pass of $f$ supervises only a single index, i.e., covering $m$ clean token positions in a sequence requires $m$ passes.

To address this input mismatch, \citet{zhao2024informed} leverage a hollow-transformer $g_{\phi'}$, with the property $g_{\phi'}^i(\rvy\oplus \mask_i)\approx p(\rvx^i=\rvy^i\,|\,\rvy\oplus \mask_i)$, enabling the use of the same loss as MDM. However, this requires adopting a new architecture (with a tweak on the attention mechanism) that processes $\rvy$ while equivalently behaving as if position $i$ were masked, which entails retraining and limiting plug-and-play applicability to pretrained MDMs.

As we will show, PRISM does not possess these limitations: it is plug-and-play with any pretrained MDM $f_\theta$ (no architectural changes), and—even when $f_\theta$ is imperfect, it recovers the desired per-token quality $g_\phi$ in the infinite-data limit.

\subsection{Theoretical Foundation of PRISM}\label{sec:prism_theory}
In this section, we establish the theoretical foundation of PRISM. We begin by recalling the premise behind the MDM training loss before describing how we adapt this premise to give rise to PRISM.

\textbf{Marginalization perspective on MDM training.} We rewrite the MDM training loss in~\eqref{eq:unmasking_loss}.
\begin{equation*}
    \mathcal{L}(\theta)\coloneqq  \mathbb{E}_{\rvx,\rvz}\left[\frac{1}{n}\sum_{i\colon \rvz^i=\mask} -\log f_\theta^i(\rvx^i\,|\,\rvz) \right].
\end{equation*}
We now aim to understand why the $\mathcal{L}(\theta)$ has the unmasking posterior as its unique minimizer. First, the expectation over the joint distribution of $(\rvx,\rvz)$ can be written by first sampling $\rvz$ and then $\rvx$ from the induced posterior. For fixed $\rvz$, the loss only depends on the posterior of $\rvx^i$ given $\rvz$, which is exactly the \coloredhl{xblue}{unmasking posterior} $p(\rvx^i=\cdot\,|\,\rvz)$. Thus, we can rewrite:
\begin{equation*}
    \mathcal{L}(\theta) = \mathbb{E}_\rvz \left[\frac{1}{n}\sum_{i\colon \rvz^i=\mask}\xblue{\mathbb{E}_{v\sim p(\rvx^i=\cdot\mid \rvz)}  \left[-\log f_\theta^i(v\,|\,\rvz)\right]} \right],
\end{equation*}
and then we observe that the blue-colored term is the cross-entropy between two distributions $f_\theta^i(\cdot\,|\,\rvz)$ and $p(\rvx^i=\cdot\,|\,\rvz)$, which is minimized at \xblue{$f_{\theta^\star}^i(\cdot\mid \rvz)=p(\rvx^i=\cdot\,|\,\rvz)$} for every $(\rvz,i)$. Put differently, the masking process of MDM defines the joint distribution over $(\rvx,\rvz)$ and fixing $\rvz$ and \emph{marginalizing} over $\rvx$ yields a cross-entropy between the network output and the desired unmasking posterior. We use this marginalization technique to build PRISM.

\textbf{PRISM.} Recall our goal for training $g_\phi$ to predict \coloredhl{xxgreen}{per-token quality}, formally defined as $\xxgreen{g_{\phi^\star}^i(\rvy)=p(\rvx^i=\rvy^i\,|\,\rvy\oplus\mask_i)}$. Given a pretrained MDM $f_\theta$, PRISM proceeds as follows.
\begin{itemize}[leftmargin=17pt, noitemsep, parsep=3pt, topsep=0pt]
    \item[(a)] Construct a pair $(\rvx,\rvz)$ as in MDM training: sample $\rvx\sim p_{\mathrm{data}}$ and then draw $\rvz$ by masking.
    \item[(b)] Sample $\rvy$: choose a masked index $i$ from $\rvz$ and replace it with a clean token $\rvy^i\sim f_\theta^i(\cdot\,|\,\rvz)$.
\end{itemize}
Step (a) defines the same joint distribution over $(\rvx,\rvz)$ as MDM. Step (b) yields a random variable $(\rvy,i)$ conditioned on $\rvz$. Although we do not designate a sampling rule for the index $i$, PRISM's guarantee is invariant to its rule, e.g., uniformly random, confidence-based. As a result, PRISM induces a joint distribution over $(\rvx,\rvz,(\rvy,i))$; for illustration, please see Figure~\ref{fig:main} (right). Taking expectation over this joint distribution, we define the PRISM loss:
\begin{equation} \label{eq:loss_prism}
    \mathcal{L}(\phi)\coloneqq\mathbb{E}_{\rvx,\rvz, (\rvy,i)}\left[\mathrm{BCE}(\,\mathbf{1}[\rvx^i=\rvy^i]\, , \,\xxgreen{g_\phi^i(\rvy)}\,) \right],
\end{equation}

where $\mathrm{BCE}$ denotes a Binary Cross-Entropy loss  $\mathrm{BCE}(b,p)\!=\!-b\log p \!-\! (1\!-\!b)\log (1\!-\!p)$ for a binary label $b\!\in\!\{0,1\}$. This loss is the crux of PRISM, whose unique minimizer is the true per-token quality.

\begin{tcolorbox}[
  colback=xslategray!10,
  colframe=black,
  boxrule=0.5pt,
  arc=4pt,
  left=4pt, right=4pt,
  top=1.3pt, bottom=2.0pt,
  boxsep=2.0pt,
  before skip=3.0pt, after skip=3.0pt
]
\begin{proposition}[PRISM] \label{prop:prism_main}
The per-token quality uniquely minimizes the PRISM loss $\mathcal{L}(\phi)$ \eqref{eq:loss_prism}:
\begin{equation*}
g_{\phi^\star}^i(\rvy)=p(\rvx^i=\rvy^i\,|\,\rvy\oplus\mask_i).
\end{equation*}
\end{proposition}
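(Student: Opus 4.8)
The plan is to mimic the marginalization argument the paper just gave for the MDM unmasking loss. The key observation is that the BCE objective decouples across the conditioning variable, so it suffices to analyze the inner expectation for each fixed pair $(\rvy, i)$ arising with positive probability under the PRISM sampling procedure. First I would rewrite the joint expectation $\mathbb{E}_{\rvx,\rvz,(\rvy,i)}$ by conditioning: sample $(\rvy, i)$ from its marginal, and then take the conditional expectation of $\mathbf{1}[\rvx^i = \rvy^i]$ given $(\rvy, i)$. Since $g_\phi^i(\rvy)$ depends only on $(\rvy, i)$ and not on the latent $(\rvx, \rvz)$, the loss becomes
\begin{equation*}
\mathcal{L}(\phi) = \mathbb{E}_{(\rvy,i)}\left[\mathbb{E}\left[\mathrm{BCE}(\mathbf{1}[\rvx^i=\rvy^i], g_\phi^i(\rvy)) \,\middle|\, \rvy, i\right]\right] = \mathbb{E}_{(\rvy,i)}\left[\mathrm{BCE}\bigl(q(\rvy,i), g_\phi^i(\rvy)\bigr)\right] + (\text{const}),
\end{equation*}
where $q(\rvy,i) := \Pr[\rvx^i = \rvy^i \mid \rvy, i]$ and I use the standard fact that $\mathbb{E}[\mathrm{BCE}(B, p)]$ over a Bernoulli $B$ with mean $q$ equals $\mathrm{BCE}(q, p) + H(q)$, minimized uniquely at $p = q$ (strict convexity of BCE in its second argument). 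Hence the pointwise minimizer satisfies $g_{\phi^\star}^i(\rvy) = q(\rvy, i)$ for every $(\rvy,i)$ in the support, and assuming the network is expressive enough this is the unique minimizer.

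The remaining, and genuinely substantive, step is to identify $q(\rvy, i) = \Pr[\rvx^i = \rvy^i \mid \rvy, i]$ with the target $p(\rvx^i = \rvy^i \mid \rvy \oplus \mask_i)$. Here I would unpack the PRISM generative process: given $\rvx \sim p_{\mathrm{data}}$, we form $\rvz$ by masking, then pick a masked index $i$ (by whatever rule) and set $\rvy^j = \rvz^j = \rvx^j$ for $j \neq i$ among the unmasked-in-$\rvz$ positions — wait, more carefully, $\rvy$ agrees with $\rvz$ everywhere except at position $i$, which was $\mask$ in $\rvz$ and is now a fresh sample $\rvy^i \sim f_\theta^i(\cdot \mid \rvz)$. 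The crucial point is that $\rvy \oplus \mask_i = \rvz$ whenever position $i$ was the unique position where $\rvy$ and $\rvz$ differ — i.e., knowing $(\rvy, i)$ we recover $\rvz = \rvy \oplus \mask_i$ exactly. This means conditioning on $(\rvy, i)$ is the same as conditioning on $(\rvz, i, \rvy^i)$, and I can compute
\begin{equation*}
\Pr[\rvx^i = \rvy^i \mid \rvy, i] = \Pr[\rvx^i = \rvy^i \mid \rvz = \rvy\oplus\mask_i,\ \rvy^i,\ i].
\end{equation*}
Now $\rvy^i$ was drawn from $f_\theta^i(\cdot \mid \rvz)$ independently of $\rvx$ given $\rvz$, and — this is the key subtlety — the index-selection rule and the draw of $\rvy^i$ do not depend on $\rvx$ except through $\rvz$. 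So conditioning further on $\rvy^i$ and $i$ does not change the posterior of $\rvx^i$ given $\rvz$: we get $\Pr[\rvx^i = v \mid \rvy, i] = p(\rvx^i = v \mid \rvz)$ evaluated at $v = \rvy^i$, i.e., $q(\rvy, i) = p(\rvx^i = \rvy^i \mid \rvy \oplus \mask_i)$, as claimed.

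I expect the main obstacle to be handling this conditional-independence step cleanly — in particular, making precise that (i) the selection rule for $i$ may depend on $\rvz$ (and even on extra randomness) but not on the hidden $\rvx$, so that $\rvx^i \perp (i, \rvy^i) \mid \rvz$; and (ii) the map $(\rvy, i) \mapsto (\rvz, i, \rvy^i)$ is a bijection on the relevant support, so conditioning on one is conditioning on the other. A secondary point worth stating explicitly is that the target is only pinned down on pairs $(\rvy, i)$ that occur with positive probability — namely those reachable by masking some $\rvx$ in $\mathrm{supp}(p_{\mathrm{data}})$ to get $\rvz$, then unmasking position $i$ to a token in the support of $f_\theta^i(\cdot \mid \rvz)$ — and that "unique minimizer" is understood over functions restricted to this support (or, equivalently, in the realizable/expressive-model sense the paper uses for the MDM loss). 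Notably, nowhere does the argument need $f_\theta$ to equal the true posterior: an imperfect $f_\theta$ only changes which $\rvy$'s are sampled and with what frequency, not the conditional probability $p(\rvx^i = \rvy^i \mid \rvy \oplus \mask_i)$ that the minimizer is forced to match — which is exactly the robustness claim made right before the proposition.
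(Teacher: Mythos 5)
Your proof is correct and takes essentially the same route as the paper's: condition on $(\rvy,i)$, note that $\rvz=\rvy\oplus\mask_i$ is recovered deterministically and that $(i,\rvy^i)$ is conditionally independent of $\rvx$ given $\rvz$, so the conditional mean of the label is $q=p(\rvx^i=\rvy^i\,|\,\rvy\oplus\mask_i)$, and then minimize the resulting pointwise binary cross-entropy at $g_{\phi^\star}^i(\rvy)=q$ (your explicit treatment of the index-selection rule and the support caveat is a welcome elaboration of what the paper leaves implicit). The only inconsequential slip is the identity $\mathbb{E}[\mathrm{BCE}(B,p)]=\mathrm{BCE}(q,p)$ itself, i.e.\ $\KL(q\Vert p)+H(q)$, rather than $\mathrm{BCE}(q,p)+H(q)$; the unique minimizer $p=q$ is unaffected.
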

\end{tcolorbox}

Before proving Proposition~\ref{prop:prism_main}, we highlight two key advantages that PRISM offers.

\textbf{Advantages.} First, the minimizer $g_{\phi^\star}^i(\rvy)$ \emph{does not} depend on the specific choice of $f_\theta$; \coloredhl{xxpurple}{$f_\theta$ affects the training objective only through the marginalized distribution of $(\rvy,i)$, not the minimizer}. This contrasts with the aforementioned distillation-based approach, in which an imperfect $f_\theta$ will not give rise to a perfect $g_\phi$. Second, PRISM naturally adapts to a pretrained MDM; a single model can share the same architecture and carry two heads, one for the unmasking posterior and one for per-token quality. We detail the modeling choices of PRISM in Section~\ref{sec:prism_empirical}.

\textbf{PRISM: Guarantee.} Proposition~\ref{prop:prism_main} can be proven within a few lines. The key mechanism is that the joint distribution $(\rvx,\rvz,(\rvy,i))$ exhibits a simple posterior given $(\rvy,i)$; $\rvz$ is deterministically $\rvz=\rvy\oplus\mask_i$ and each $\rvx^i$ is drawn from the same unmasking posterior of MDM, i.e., $v \sim p(\rvx^i=\cdot\,|\,\rvz)$. Therefore, fixing $(\rvy,i)$ and then \emph{marginalizing} the binary label $\mathbf{1}[\rvx^i=\rvy^i]$ over $(\rvx,\rvz)$ yields the simple byproduct,
\begin{equation*}
  \xxpurple{q}\coloneqq\mathbb{E}_{v\sim p(\rvx^i=\cdot \mid \rvy\oplus\mask_i)}\left[\mathbf{1}[v=\rvy^i]\right]= p(\rvx^i=\rvy^i\,|\,\rvy\oplus\mask_i),
\end{equation*}
the per-token quality. We then expand the binary cross-entropy loss:
\begin{align*}
    \mathcal{L}(\phi)=\mathbb{E}_{(\rvy,i)}\left[-\xxpurple{q}\log(\xxgreen{g_\phi^i(\rvy)}) -(\xxpurple{1-q})\log(\xxgreen{1-g_\phi^i(\rvy)}) \right],
\end{align*}
which is minimized at $\xxgreen{g_{\phi^\star}^i(\rvy)=p(\rvx^i=\rvy^i\,|\,\rvy\oplus\mask_i)}$.

\subsection{Empirical Design Choice of PRISM}\label{sec:prism_empirical}
In this section, we present our empirical modeling choices that go into the design of PRISM. We first explain the plug-and-play aspect of PRISM, in which we adapt a pretrained MDM and fine-tune it via the PRISM loss.

\textbf{Plug-and-Play approach.} 
Assume a pretrained MDM $f_\theta$ is given, which by design has learned the unmasking posterior. PRISM augments this model with a lightweight auxiliary adapter that reuses the same backbone to produce a size-$L$ tensor. In words, \coloredhl{xxpurple}{the unmasking posterior and per-token quality share a single backbone and are computed by separate heads}.

For a given sequence, we respectively denote the two head outputs as $f_\theta$ and $g_\theta$. Let $\mathbf{h}_\theta$ denote the final hidden state of the MDM backbone. The unmasking posterior $f_\theta$ is obtained by passing $\mathbf{h}_\theta$ through the unmasking head (which originally exists in pretrained MDM) and then applying the softmax. The per-token quality $g_\theta$ is obtained by passing $\mathbf{h}_\theta$ through the attached head and applying the sigmoid. Sigmoid is used to guarantee $g_\theta^i\in[0,1]$, making it suitable for the binary cross-entropy loss. Thus, with a single $\theta$ and a lightweight adapter, we \emph{jointly model} the unmasking posterior and then the per-token quality: for a partially masked sequence $\rvw$,
\begin{align*}
   &\xxgreen{g_\theta^i(\rvw)} \approx p(\rvx^i = \rvw^i\,|\,\rvw\oplus \mask_i)\in[0,1],\\ & \xblue{f_\theta^i(\cdot \,|\,\rvw)}\approx p(\rvx^i=\cdot\,|\,\rvw)\in\Delta(\mathcal{V}).
\end{align*}
At a high level, the PRISM recipe is simple: train $g_\theta$ using the PRISM loss defined in~\eqref{eq:loss_prism}. Consequently, the parameters updated through $g_\theta$ are those of the attached head and the backbone, excluding the unmasking head. As an alternative to fine-tuning the backbone, one can add a LoRA adapter~\citep{hu2022lora} into the backbone and train only the adapter parameters.

\textbf{Practical interventions.} We introduce two practical interventions that improve PRISM's efficiency. First, to preserve $f_\theta$'s ability to estimate the unmasking posterior while $g_\theta$ is learning toward the per-token quality, we add the MDM training loss~\eqref{eq:unmasking_loss} as a regularization term. 

Second, in the sampling process for constructing PRISM training pairs $(\rvx,\rvz, (\rvy,i))$, we modify step (b). To obtain $\rvy$, rather than unmasking a single $\rvy$ from $\rvz$, we obtain multiple $(\rvy,i)$ pairs from one $\rvz$. This produces multiple training pairs $(\rvx,\rvz, (\rvy,i))$ with varying $(\rvy,i)$ values from a single forward pass of $f_\theta(\cdot\,|\,\rvz)$, improving data efficiency. Since this procedure of sample construction minimizes $g_\theta$ (not $f_\theta$), we apply stop-gradient to this $f_\theta$. We also provide the pseudocode at Algorithm~\ref{alg: adapter fine-tuning}.

Importantly, the theoretical guarantee in Proposition~\ref{prop:prism_main} still maintains: the true minimizer $g_\theta$ of this modified PRISM loss continues to encode meaningful per-token quality. We defer the precise statement and proof to Appendix~\ref{proposition_proof}. Moreover, we conduct a careful ablation of different design choices on sampling $\rvy$ in Appendix~\ref{appendix: k_discussion} and Appendix~\ref{appendix:f_theta_discussion}.

\textbf{Practical advantages}. PRISM formulation not only guarantees to minimize the ground truth per-token quality but also has two practical advantages. First, since $f_\theta$ is already pretrained, its hidden states encode useful representations, likely accelerating $g_\theta$'s training on the per-token quality. Second, the targeted per-token quality is a scalar per position, in contrast to the per-position distribution by the unmasking posterior, making it substantially easier to learn. Perhaps surprisingly, we observe that using much less compute compared to MDM pretraining already yields a meaningful learning of the per-token quality (which we detail these results in Section~\ref{sec:experiment}), reinforcing the perspective that PRISM is a cheap \emph{fine-tuning} intervention.

\subsection{Employing PRISM at Inference}\label{sec:prism_inference}
In this section, we explain how a model resulting from PRISM is used to perform remasking at inference. The high-level procedure follows Section~\ref{sec:prelim_mdm}; at each inference step, we perform remasking and unmasking simultaneously, using the learned unmasking posterior and per-token quality, respectively.  Assume we are given a fine-tuned MDM $\theta$ (via PRISM loss) and a monotonically decreasing grid of times $t_0=1>\dots>t_N = 0$ for inference. We initialize $\rvx_{t_0}=(\mask,\dots,\mask)$. At each step $t_\ell$, we obtain $\rvx_{t_{\ell+1}}$ by these following steps.
\begin{itemize}[leftmargin=1.4em,itemsep=0.4pt]
    \item[(a)] Choose a subset of masked tokens $\mathcal{S}$ to \coloredhl{xblue}{unmask}.
    \item[(b)] Choose a subset of clean tokens $\mathcal{T}$ to \coloredhl{xxgreen}{remask} with the lowest quality scores $g_\theta(\rvx_{t_{\ell}})$.
    \item[(c)] For each $i\in\mathcal{T}$, \coloredhl{xxgreen}{remask} $\rvx_{t_\ell}^i$ and for each $j\in\mathcal{S}$, \coloredhl{xblue}{unmask} $\rvx_{t_\ell}^j$ to a clean token sampled from $f_\theta^j(\cdot\,|\, \rvx_{t_\ell})\in\Delta(\mathcal{V})$.
\end{itemize}
We emphasize that $f_\theta^{\cdot}(\cdot\,|\,\rvx_{t_\ell})$ and $g_\theta(\rvx_{t_{\ell}})$ are obtained from a \emph{single} forward pass of the backbone. Consequently, \coloredhl{xxpurple}{our method does not add any computational overhead} relative to methods without remasking (vanilla MDM inference) or training-free approaches, e.g, ReMDM.

\textbf{Design choices.} There are three design choices in this inference procedure: the number of tokens to unmask at each step--$|\mathcal{S}|$, the rule for choosing $\mathcal{S}$, and the number of tokens to remask at each step--$|\mathcal{T}|$. As our focus is on PRISM's remasking strategy, we do not modify the design choices regarding the rule for selecting $\mathcal{S}$ but follow prior seminal approaches. We also keep the rule for selecting $|\mathcal{T}|$ as simple as possible; For unmasking schemes that assign $|\mathcal{S}|$ stochastically, e.g.,~\cite{sahoo2024simple,wang2025remasking,shi2024simplified,gat2024discrete}), $|\mathcal{S}|\sim \mathrm{Binom}(n,p)$ where $p$ depends on the discretized time steps and $n$ is the number of masked tokens in a given sequence. Here we draw $|\mathcal{T}|\sim \mathrm{Binom}(L-n,\eta)$ for $\eta>0$, and accordingly set $|\mathcal{S}|\sim |\mathcal{T}|+\mathrm{Binom}(n,p)$ to preserve the total unmasked count's expectation. For schemes that preallocate $|\mathcal{S}|$s uniformly before inference, e.g.,~\cite{nie2025large}), we likewise preallocate $|\mathcal{T}|$ uniformly across the time steps.
\section{Experiments}
\label{sec:experiment}
In this section, we demonstrate that PRISM is a practically efficient and scalable approach across different domains and scales. In Section~\ref{sec:sudoku}, we apply PRISM to a $30$M-scale MDM for Sudoku. In Section~\ref{sec:owt}, we apply PRISM to a 170M-scale MDM and evaluate its ability for natural language modeling, and in Section~\ref{sec:llada}, we apply PRISM to LLaDA-8B-Instruct~\citep{nie2025large} and evaluate it on a Python coding task. In Section~\ref{sec:mechanistic}, we conduct a mechanistic analysis on PRISM to further showcase its efficiency. We first outline the pipeline shared across all experiments. 
\textbf{Experiment pipeline.} 
Starting from a pretrained MDM, we attach a lightweight adapter and fine-tune it with the PRISM loss. We compare against remasking baselines ReMDM (ReMDM-cap/ReMDM-loop) and ReMDM-conf~\citep{wang2025remasking}, re-evaluated using the same backbone after PRISM fine-tuning.
We summarize our experimental results as follows:
\begin{itemize}[leftmargin=10pt, noitemsep, parsep=3pt, topsep=0pt]
    \item \coloredhl{xsienna}{\textbf{Efficiency}}: The PRISM fine-tuned MDM has the ability to self-correct using much fewer samples compared to those used for pretraining.
    \item \coloredhl{xsienna}{\textbf{Performance}}: The PRISM-tuned MDMs outperform ReMDM and ReMDM-conf, indicating that PRISM learns our desired notion of per-token quality quite well.
    \item \coloredhl{xsienna}{\textbf{Operation}}: PRISM fine-tuned MDMs calibrate the per-token qualities as desired, and qualitatively correct errors in coding problems.
\end{itemize}

\subsection{Motivating Example: Sudoku Puzzle}   \label{sec:sudoku}
In this section, we examine PRISM on Sudoku puzzles and show that the learned per-token quality operates as desired.
Before moving on to the results, we use Sudoku as a \emph{metaphorical testbed} to explain why ReMDM and ReMDM-conf can fail to capture the intended per-token quality.

\textbf{Sudoku as a metaphorical testbed.}
ReMDM assigns token quality randomly, which can unnecessarily flag correct cells. ReMDM-conf sets the token quality to the likelihood at the time a cell was unmasked; this is computed on a previous Sudoku board state and can quickly become misaligned from the current board. In contrast, PRISM predicts per-token quality under the current board, thereby reliably identifying incorrect cells.

\begin{figure}[t]
    \centering
    \includegraphics[width=\linewidth, trim ={0.5cm 0cm 0cm 0.6cm}]{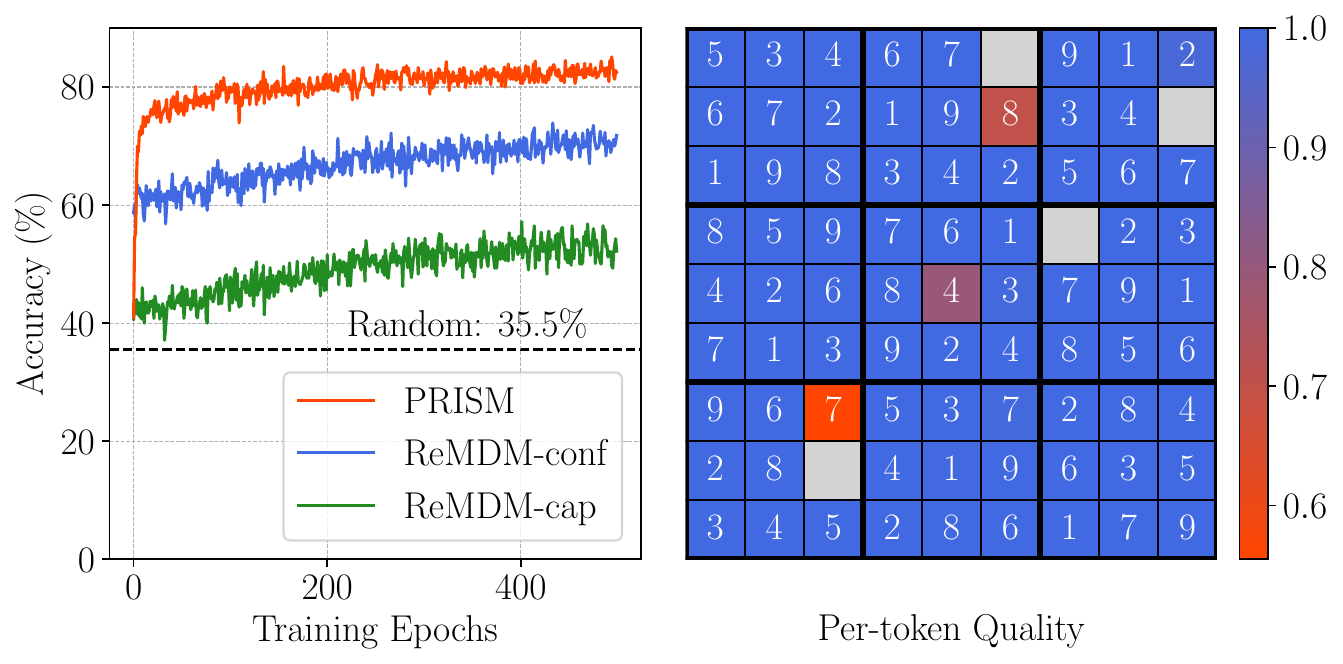}
    \caption{\textbf{Left:} PRISM achieves higher accuracy on Sudoku puzzles than baselines. (\coloredul{xred}{red curve}) \textbf{Right:} PRISM detects the incorrect cells by assigning low per-token quality (\coloredul{xred}{red-colored cells}).}
   \label{fig:sudoku-success-rate_}
   \vspace{-0.20in}
\end{figure}

\textbf{Setup.} We construct a corpus of Sudoku puzzles ($48$k for training and $2$k for evaluation), following the setup of~\citep{ben2025accelerated,alp2024sudoku}. Each board is tokenized as a length-$89$ sequence comprising $81$ cell values and $8$ [\texttt{EOL}] tokens that separate columns. Following the MDM pretraining recipe of~\citet{sahoo2024simple}, we use a $28.6$M Diffusion Transformer (DiT; \citet{peebles2023scalable}) architecture as the backbone and pretrain it for $100$k iterations ($\approx 530$ epochs). We then fine-tune it with the PRISM loss, additionally with a lightweight adapter ($0.13$M parameters).

\textbf{Results.} We evaluate the success rate for the same model across epochs for PRISM and baselines. As shown in Figure~\ref{fig:sudoku-success-rate_}, PRISM achieves progressively higher success rates during fine-tuning (\coloredul{xred}{red curve}) and surpasses the baselines. This confirms that the learned per-token quality enables effective self-correction at inference. Notably, PRISM starts to outperform vanilla MDM inference (\coloreddashul{black}{dashed line}) and baselines within only a few epochs, far fewer than the pretraining epochs ($\approx 530$), corroborating our claim of sample efficiency. 

\textbf{Visualization of learned per-token quality.} Figure~\ref{fig:sudoku-success-rate_} further visualizes PRISM's ability to detect incorrect tokens. On the shown board, three digits are misfilled: $(2,6)=8$, $(5,5)=4$, and $(7,3)=7$. The computed per-token quality assigns low scores to these cells (\coloredul{xred}{red}), while other cells attain high scores, nearly $0.99$ (\coloredul{xblue}{blue}), indicating that the fine-tuned model has almost perfectly learned the per-token quality.

\begin{figure*}[t]
  \centering
\includegraphics[width=1.0\linewidth]{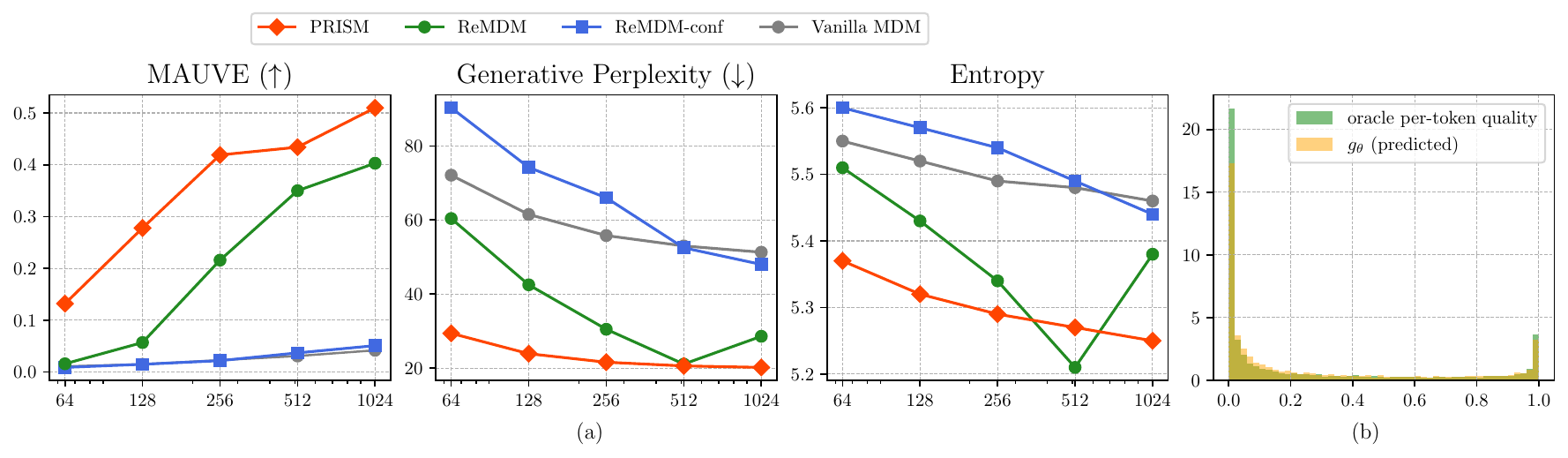}
\vspace{-20pt}
\caption{\textbf{(a) Unconditional text generation performance.}
Metrics are evaluated at $\text{NFE} \in \{64, 128, 256, 512, 1024\}$; PRISM (\coloredul{xred}{red}) outperforms baselines (ReMDM: \coloredul{xgreen}{green}, ReMDM-conf: \coloredul{xblue}{blue}, Vanilla MDM (without remasking): \coloredul{xgray}{gray}), particularly at lower sampling steps. Detailed numerical results are reported in Table~\ref{tab:remdm-exp-owt}. \textbf{(b) Calibration study.} PRISM fine-tuned model's per-token quality closely tracks the empirical likelihood.}
  \label{fig:mauve-and-gen-ppl-plot}
  \vspace{-0.2in}
\end{figure*}

\subsection{Unconditional Text Generation} \label{sec:owt}
\textbf{Setup}. We first take a DiT-based MDM with $170$M parameters, using a checkpoint from~\citep{sahoo2024simple}. This MDM is pretrained on the OpenWebText corpus~\citep{Gokaslan2019OpenWeb} with the GPT-2 tokenizer~\citep {radford2019language}, a maximum sequence length $L=1024$, and a total of $262$B tokens. We then attach a lightweight $7$M parameter adapter, one attention layer applied to the final hidden state, and fine-tune it with the PRISM loss on a total $164$M tokens from the same OpenWebText corpus. This fine-tuning is data-efficient, using $\mathbf{1600}\times$ fewer tokens than pretraining. During fine-tuning, we select the subset $\mathcal{S}$ based on confidence scores.

\textbf{Results.} We then assess the performance of PRISM and baselines on unconditional generation. Our primary metrics are Generative Perplexity and MAUVE \citep{pillutla2021mauve}, which respectively assess per-sequence fluency and distributional similarity between generated samples and the reference distribution. Moreover, given the prior observation~\citep{zheng2024masked} that a sequence with too many redundant tokens can spuriously yield better, i.e., lower generative perplexity, we report the entropy as a sanity metric. Following standard evaluation protocols~\citep{pillutla2021mauve, wang2025remasking}, we generate $5000$ samples to compute both metrics, using GPT-2-Large as the reference model. As shown in Figure~\ref{fig:mauve-and-gen-ppl-plot}.(a), PRISM 
outperforms baselines on both metrics (\coloredul{xred}{red curves}) while achieving a comparable entropy, especially in the regime with fewer sampling steps.
See Appendix~\ref{appendix:owt_full_results} for more details.

\begin{table}[t]
    \centering
    \small
    \setlength{\tabcolsep}{4.5pt}
    \begin{tabular}{lccc|ccc}
        \toprule
        & \multicolumn{3}{c|}{\textbf{HumanEval}} & \multicolumn{3}{c}{\textbf{MBPP}} \\
        \cmidrule(lr){2-4} \cmidrule(lr){5-7}
        \textbf{Sampling steps} & 256 & 512 & 1024 & 256 & 512 & 1024 \\
        \midrule
        PRISM      & \textbf{28.0} & \textbf{39.0} & \textbf{42.7} & \textbf{21.8} & \textbf{29.1} & 32.3 \\
        ReMDM      & 26.2          & 34.8          & 42.5          & 19.7          & 28.5          & \textbf{32.9} \\
        ReMDM-conf & 25.6          & 34.1          & 36.6          & 20.1          & 29.0          & 32.5 \\
        MDM        & 25.6          & 34.1          & 36.6          & 18.2          & 27.8          & 31.9 \\
        \bottomrule
    \end{tabular}
    \vspace{0.01in}
    \caption{Zero-shot performance on coding tasks.}
    \label{tab:llada}
    \vspace{-0.3in}
\end{table}

\subsection{Applying PRISM to 8B MDM}
\label{sec:llada}
\textbf{Setup.} We take LLaDA-$8$B-Instruct~\citep{nie2025large}, an open-sourced $8$B MDM. While freezing the backbone, we add an additional head and a LoRA adapter~\citep{hu2022lora}, for a total of $\approx 250$M trainable parameters. We fine-tune it using the PRISM loss on the opc-sft-stage2~\citep{Huang2024OpenCoderTO} ($0.1$M pairs) for $100$ epochs, which takes $\approx 30$ hours on $12$ NVIDIA H100 GPUs.

\textbf{Results.} We evaluate the resulting model in a zero-shot setting on MBPP~\citep{austin2021program} and HumanEval~\citep{chen2021evaluating}, a challenging Python coding benchmarks. As shown in Table~\ref{tab:llada}, PRISM outperforms ReMDM and ReMDM-conf across different numbers of sampling steps. We detail training and inference configurations in Appendix~\ref{appendix:configs},~\ref{appendix:llada}.

\subsection{Mechanistic Analysis of PRISM} \label{sec:mechanistic}
In this section, we investigate PRISM’s behavior at a \emph{fine-grained level}, showing that the overall PRISM pipeline operates as desired.

\textbf{Calibration analysis.} We measure how well the learned per-token quality scores are calibrated to the targeted likelihood. Using the fine-tuned model from Section~\ref{sec:owt}, we collect per-token quality scores $g_\theta^i(\mathbf{x}_t)$ for all inference-time sequences $\mathbf{x}_t$s and their clean positions $i$. Then we also compute the corresponding targeted likelihood queried from $f_\theta$, namely $f_\theta^i(\mathbf{x}_t^i \mid \mathbf{x}_t \oplus \mathbf{m}_i)$. Surprisingly, the resulting calibration curves (Figure~\ref{fig:mauve-and-gen-ppl-plot}.(b); additional results deferred to Figure~\ref{fig:calibration_openweb} in the appendix) closely track the queried likelihoods, indicating that $g_\theta^i(\mathbf{x}_t)$ closely calibrates the target posterior.

\textbf{Error-correction analysis.} To assess the actual error-correction behavior of PRISM, we take the model in Section~\ref{sec:llada} and measure how often it corrects the syntax errors. Across 164 problems in HumanEval, PRISM corrects 19 out of 26 syntax errors. This provides fine-grained evidence that PRISM can identify and remediate errors in practice. For clarity, we report several failure and success instances in Appendix~\ref{appendix:error_correction}.

\textbf{PRISM-tuned model's knowledge transfers to another.}
As we underscore in Section~\ref{sec:prism_theory}, the ground-truth per-token quality is defined with respect to the data distribution and thus does not depend on a particular pretrained model. This suggests that a PRISM fine-tuned MDM $A$ can be applied to another pretrained MDM $B$ as a self-corrector, given they operate over the same distribution, e.g., Sudoku puzzles, coding. 

As a proof-of-concept experiment on this idea, we apply our PRISM fine-tuned model in Section~\ref{sec:llada} as a self-corrector while using LLaDA-8B-Instruct (not fine-tuned) as a backbone unmasking model. On HumanEval under the same evaluation protocol, self-correction yields a \emph{substantial gain of $10.9\%$ (from $23.2\%$ to $34.1\%$)}. This indicates that the PRISM-tuned model can serve as a general-purpose self-corrector for other pretrained MDMs.

\textbf{Learning likelihood itself vs. scalar.} 
PRISM learns a scalar per-token quality score, rather than the full per-position likelihood $ p(\rvx^i = \cdot \,|\,\rvy\oplus \mask_i)$. The main motivation is training efficiency: as a scalar score is already informative for self-correction, and learning a full categorical distribution can be computationally inefficient.

The PRISM framework can also be extended to likelihood learning by replacing the BCE objective in Equation~\ref{eq:loss_prism} with a cross-entropy objective. This richer supervision can distinguish genuinely incorrect tokens from high-entropy positions with many plausible alternatives, enabling uncertainty-aware remasking at additional training cost. We provide additional discussion in Appendix~\ref{app:learning_likelihood}.

\section{Conclusion} \label{sec:conclusion}
\textbf{Discussion on experimental results.} 
MDMs model per-position unmasking posteriors rather than the joint distribution over positions. Consequently, with few-step inference, they are prone to dependency errors across positions~\citep{liu2024discrete,xu2024energy,kang2025parallelbench}, which creates a natural opportunity for our self-correction mechanism to detect erroneous predictions. Indeed, the performance gap between PRISM and the vanilla inference increases as the number of sampling steps decreases (Figure~\ref{fig:mauve-and-gen-ppl-plot}, Table~\ref{tab:llada}).

\textbf{Outlook.} We introduced PRISM, a plug-and-play fine-tuning framework that equips pretrained MDMs with self-correction ability, addressing prevalent limitations of prior work. PRISM operates under a theoretically grounded fine-tuning loss and shows efficiency across multiple domains and scales. 

A unique feature of the PRISM loss is that it provably learns per-token quality without fully generating a clean sequence. This sheds light on PRISM's potential compared to other approaches, such as reinforcement learning, where full sequence generation is necessary to obtain a learning signal.

That said, our notion of per-token quality score has limits; as it is based on the per-position posterior marginals, it cannot fully capture global errors, e.g., reasoning. To our knowledge, nevertheless, no prior work principally even targets the per-token quality within the efficient training objective. We therefore view this work as a step that \emph{opens the door} to developing frameworks that enable discrete diffusion models to correct global reasoning errors.

Stepping back, this work is part of a broader program to build generative models that compose discrete sequences the way that humans do--through correction and reordering. The flexibility of MDMs at inference time offers a key stepping stone towards this goal.

\newpage
\section*{Impact Statement}
This paper advances understanding of discrete diffusion models and contributes to the broader machine learning literature. We do not anticipate societal impacts that warrant specific discussion beyond these general considerations.
\bibliographystyle{icml2026}
\bibliography{main}

\newpage
\appendix
\onecolumn
\newpage
\section{Related Works} \label{appendix:prior_work}
\subsection{Background and Related Literature}

\textbf{Masked Diffusion Models.}
Masked Diffusion Models (MDMs) were previously built on continuous-time Markov chains over discrete spaces~\citep{hoogeboom2021argmax}. Subsequently, \citet{austin2021structured} introduced D3PM with several families of transition kernels, followed by SEDD~\citep{lou2023discrete}. These discrete diffusions can also be understood through the lens of discrete flows~\citep{campbell2024generative,gat2024discrete}, which parallel flow matching and stochastic interpolants over continuous spaces~\citep{albergo2022building,albergo2023stochastic,lipman2022flow}. Among transition-kernel designs, the absorbing state (\emph{masking}) kernel has been particularly popular due to its simple formulation and training objective. Subsequent work established a theoretical framework for these models~\citep{zheng2023reparameterized,sahoo2024simple,zheng2024masked}. Follow-up works have scaled MDMs up to billions of parameters, showcasing their potential compared to autoregressive models~\citep{nie2025large,dream2025,gemini2025diffusion,labs2025mercury,gong2025diffucoder}.

\textbf{Inference-time strategies for discrete diffusion models.}
A key feature of MDMs is their inference-time flexibility, highlighted by \citet{zheng2024masked,ou2024your} and aligned with our any-order perspective in Section~\ref{sec:prelim_mdm}. Recall the MDM inference procedure from Section~\ref{sec:prelim_mdm}:
\begin{itemize}[leftmargin=15pt,itemsep=0pt,topsep=-0.25em]
\item[(a)] Choose a subset of masked positions $\mathcal{S}\subseteq\{i\mid \rvx_{t_\ell}^i=\mask\}$ to \coloredhl{xblue}{unmask}.
\item[(b)] For each $i\in\mathcal{S}$, \coloredhl{xblue}{unmask} by sampling a clean token from $f_\theta^i(\cdot\mid \rvx_{t_\ell})\in\Delta(\mathcal{V})$ and setting $\rvx_{t_\ell}^i$ accordingly.
\end{itemize}
The choice of $\mathcal{S}$ is entirely flexible, and an appropriate selection can substantially improve downstream task performance. Since our work focuses on self-correction, i.e., re-masking, we only briefly survey prior work on unmasking strategies \citep{kim2025train,peng2025pathplanningmaskeddiffusion,chang2022maskgit,ben2025accelerated,rout2025anchored}. A complementary line of work studies inference-time planning strategies for discrete diffusion with uniform transition kernels \citep{liu2024think,varma2024glauber}; these methods fall outside the scope of MDMs.

\subsection{Details on prior work on remasking strategy (self-correction) for MDM}
In this section, we detail our discussion in Section~\ref{sec:prior_work_main}, focusing on their different notions of per-token qualities. We recall the notion of per-token quality stated in Section~\ref{sec:prior_work_main}.
\begin{equation*}
    g_\star\colon (\mathcal{V}\cup\{\mask\})^L \to [0,1]^L,\quad \xxgreen{g^i_\star(\rvy) \approx (\text{Per-token quality})}=(\text{``how likely'' $\rvy^i$ is given $\rvy$}).
\end{equation*}
\begin{itemize}[leftmargin=*,itemsep=0pt]
\item \textbf{DFM}~\citep{gat2024discrete} and \textbf{ReMDM}~\citep{wang2025remasking} randomly remask clean tokens in $\rvy$, which is equivalent to assigning i.i.d.\ per-token qualities $g_\star^i(\rvy)\sim \operatorname{Unif}[0,1]$.
\item \textbf{P2-Self}~\citep{peng2025pathplanningmaskeddiffusion} defines per-token quality using the pretrained MDM’s score for the observed token under input $\rvy$, e.g., the probability $f_\theta^i(\rvy^i \mid \rvy)$. Given that $\rvy^i\neq\mask$, the model was never trained on this index in this state; it only encodes a meaningful quantity (unmasking posterior) when $\rvy^i=\mask$. This notion of per-token quality is therefore not grounded.
\item \textbf{ReMDM-conf}~\citep{wang2025remasking} takes the per-token quality to be the likelihood at the step the token was unmasked: if position $i$ was unmasked at $t_k>t_\ell$ (i.e., an earlier step), set $g_\star^i := f_\theta^i(\rvx_{t_k}^i \mid \rvx_{t_k})=f_\theta^i(\rvx_{t_\ell}^i \mid \rvx_{t_k})$. As noted, this ignores the current state $\rvx_{t_\ell}$ and can therefore be misaligned with the model’s present beliefs.
\item \textbf{Token Critic}~\citep{lezama2022improved,lezama2022discrete} and \textbf{P2-train}~\citep{peng2025pathplanningmaskeddiffusion} train an auxiliary scorer for the likelihood of a given prediction of clean sequence $\hat{\rvx}$. Given $\rvx_{t_\ell}$, they first form a full completion $\hat{\rvx}$ by unmasking each mask token at position $i$ from $f_\theta^i(\cdot\,\mid\,\rvx_{t_\ell})$, then evaluate quality by passing this $\hat{\rvx}$ to the external module. A key limitation is that $\hat{\rvx}$ is drawn from per-token unmasking posteriors, not the ground-truth posterior across positions; thus, even with a perfect pretrained MDM, $\hat{\rvs}$ can deviate substantially from the true likelihood $\rvx\sim p_{\mathrm{data}}$, losing a theoretical guarantee.
\item \textbf{GIDD}~\citep{von2025generalized} introduces a family of transition kernels that linearly combine uniform and absorbing (masking) kernels. Models pretrained under this kernel encode posteriors at clean-token positions--however, this requires training under the new kernel and is not directly applicable to an off-the-shelf pretrained MDM, altogether losing the benefit of MDM over other types of kernels.
\item \textbf{Informed corrector}~\citep{zhao2024informed} employs a hollow-transformer $g_{\phi'}$ satisfying $g_{\phi'}^i(\rvy)=g_{\phi'}^i(\rvy\oplus \mask_i)\approx p(\rvx^i=\rvy^i\mid \rvy\oplus \mask_i)$, enabling the same training loss as MDM. This, however, mandates a modified architecture (attention tweak) that processes $\rvy$ as if position $i$ were masked, necessitating retraining.
\end{itemize}
In summary, while these methods improve over vanilla MDM inference without remasking, they either substantially reshape the MDM pipeline or rely on imprecise notions and proxies for per-token quality.
\section{Training algorithms and guarantees of PRISM}

\subsection{PRISM fine-tuning algorithm}
\label{app:prism-algorithm}

\subsection{Marginal Equivalence of the Masking Process}
\label{appendix: marginal equality}
We formalize that the two masking procedures stated in Section~\ref{sec:prelim_mdm} induce the same conditional distribution of $\rvz$ given $\rvx$, hence the same joint distributions $(\rvx,\rvz)$. It suffices to show they generate the same distribution over masked index sets.
Consider (b): draw $t\sim\mathrm{Unif}[0,1]$ and for each $i$, replace $\rvx^i \in\mathcal{V}$ with $\mask$ with probability $t$. The probability of $n$ tokens to be masked is:
\begin{align*}
\int_{0}^{1} \binom{L}{n} t^{n} (1-t)^{L-n}dt = \binom{L}{n} \mathrm{B}(n+1,\, L-n+1)= 
\frac{1}{L+1},
\end{align*}
where $\mathrm{B}$ denotes the Beta function. This coincides with procedure (a), where $n$ is uniformly sampled from  $\{0,1,\dots,L\}$.

\subsection{PRISM provably learns the per-token quality}
\label{proposition_proof}
In this section, we restate Proposition~\ref{prop:prism_main} and detail its proof. The proof essentially follows Section~\ref{sec:prism_theory}, but we provide the detailed calculations here.

\noindent\textbf{Proposition~\ref{prop:prism_main} (restated).}
Let the PRISM loss be
\[
\mathcal{L}(\phi)=\mathbb{E}_{\rvx,\rvz,(\rvy,i)}\!\left[\mathrm{BCE}\big(\mathbf{1}[\rvx^i=\rvy^i],\, g_\phi^i(\rvy)\big)\right],
\quad
\mathrm{BCE}(b,p)=-\,b\log p-(1-b)\log(1-p).
\]
Then the unique minimizer satisfies, for every $i$ and $\rvy$,
\[
g_{\phi^\star}^i(\rvy)\;=\;p\!\left(\rvx^i=\rvy^i\,\middle|\,\rvy\oplus\mask_i\right).
\]

\begin{proof}
First, the expectation over the joint distribution $(\rvx,\rvz,(\rvy,i))$ can be written by first sampling $(\rvy,i)$ and then $(\rvz,\rvx)$ from the induced posterior. We observe that this induced posterior admits a handy expression:
\begin{equation*}
    \rvz=\rvy\oplus\mask_i,\quad v \sim p(\rvx^i=\cdot\,|\,\rvz).
\end{equation*}
Here $p(\rvx^i=\cdot\,|\,\cdot)$ is the unmasking posterior given by the MDM's masking process (Section~\ref{sec:prism_theory}), and to avoid notational confusion, we introduce a dummy variable $v$. Next, we expand the expectation term.
\begin{align*}
    \mathcal{L}(\phi) = &\mathbb{E}_{(\rvy,i)}\left[\mathbb{E}_{v\sim p(\rvx^i=\cdot\,|\,\rvz),z=\rvy\oplus\mask_i}\left[\mathrm{BCE}\big(\mathbf{1}[v=\rvy^i],\, \xxgreen{g_\phi^i(\rvy)}\big) \right]\right] \\
    =& \mathbb{E}_{(\rvy,i)}\left[-\xxpurple{q}\log(\xxgreen{g_\phi^i(\rvy)}) -(\xxpurple{1-q})\log(\xxgreen{1-g_\phi^i(\rvy)}) \right],
\end{align*}
where
\begin{equation*}
    \xxpurple{q}\coloneqq\mathbb{E}_{v\sim p(\rvx^i=\cdot \mid \rvy\oplus\mask_i)}\left[\mathbf{1}[v=\rvy^i]\right]= p(\rvx^i=\rvy^i\,|\,\rvy\oplus\mask_i).
\end{equation*}
Finally, we conclude that $g_{\phi^\star}^i(\rvy)=q=p(\rvx^i=\rvy^i\,|\,\rvy\oplus\mask_i)$ since $f_q(x)=-q\log x - (1-q)\log(1-x)$ is minimized at $x=q \in (0,1)$.
\end{proof}

\subsection{Extension on PRISM's provable guarantee}
In this section, we provide the pseudocode of the practically used PRISM fine-tuning pipeline and then state its theoretical guarantee. We note that the choice of index set $\mathcal{S}$ at line~5 of Algorithm~\ref{alg: adapter fine-tuning} is flexible; it can either be random or confidence-based. To clarify, at line~6, although we use $f_\theta$ to obtain $\rvy$, we use the stop-gradient operation.

Note that the process of obtaining $\rvy$ and $\mathcal{S}$ can be interpreted as a process of obtaining multiple $(\rvy,i)$ pairs for each $i\in\mathcal{S}$. Under this, the expectation over $(\rvx,\rvz,(\rvy,\mathcal{S}))$ is equivalent to the expectation over $(\rvx,\rvz,(\rvy,i))$. Therefore, both formulations of the loss function are valid and equivalent.
\begin{equation*}
    \mathcal{L}(\theta) = \mathbb{E}_{\rvx,\rvz,(\rvy,\mathcal{S})}\left[\frac{1}{|\mathcal{S}|}\sum_{i \in \mathcal{S}}{\mathrm{BCE}(\,\mathbf{1}[\rvx^i=\rvy^i]\, , \,{\xxgreen{g_\theta^i(\rvy)}}\,)}\right]=\mathbb{E}_{\rvx,\rvz,(\rvy,i)}\left[{\mathrm{BCE}(\,\mathbf{1}[\rvx^i=\rvy^i]\, , \,{\xxgreen{g_\theta^i(\rvy)}}\,)}\right].
\end{equation*}
Now we provide the theoretical guarantee on our PRISM loss.

\begin{proposition}[Provable guarantee of PRISM-extension]
\label{prop:prism_kmask_hiddenS}
For a fixed $1 \le k \le L $, the PRISM loss
\begin{equation*}
    \mathcal{L}(\phi) = \mathbb{E}_{\rvx,\rvz,(\rvy,i)}\left[{\mathrm{BCE}(\,\mathbf{1}[\rvx^i=\rvy^i]\, , \,{\xxgreen{g_\phi^i(\rvy)}}\,)}\right]
\end{equation*}
has the unique minimizer
\begin{equation*}
  \xxgreen{g_{\phi^\star}^i(\rvy)}
=\mathbb{E}_{\mathcal{S} \sim \Pi(\cdot\mid \rvy,i)}\left[p(\rvx^i=\rvy^i\mid \rvy\oplus\mask_\mathcal{S})\right],  
\end{equation*}
where the expectation is taken over the posterior distribution over $\mathcal{S}$ given $\rvy$ and $i$, induced by the joint distribution above. Here $\rvy\oplus\mask_\mathcal{S}$ is a sequence obtained from $\rvy$ by replacing its $i$-th token with $\mask$ for every $i\in\mathcal{S}$, aligning the notation in Section~\ref{sec:prism_theory}.
\end{proposition}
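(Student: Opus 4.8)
The plan is to follow the proof of Proposition~\ref{prop:prism_main} almost verbatim, with one new wrinkle: once $k\ge 2$, the masked sequence $\rvz$ is no longer pinned down by the observed pair $(\rvy,i)$, since many unmasking sets $\mathcal{S}\ni i$ are consistent with the same $(\rvy,i)$, each inducing a different $\rvz=\rvy\oplus\mask_\mathcal{S}$. So the crux is to identify the conditional law of $(\rvx,\rvz,\mathcal{S})$ given $(\rvy,i)$ and then marginalize out the binary label $\mathbf{1}[\rvx^i=\rvy^i]$ exactly as in Section~\ref{sec:prism_theory}.

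First I would condition the outer expectation on $(\rvy,i)$. Because $g_\phi^i(\rvy)$ is constant inside the resulting inner conditional expectation, and $\mathrm{BCE}(\cdot,p)$ is affine in its first argument---indeed $\mathrm{BCE}(b,p)=b\log\tfrac{1-p}{p}-\log(1-p)$---the inner expectation collapses to $\mathrm{BCE}\big(\bar q(\rvy,i),\,g_\phi^i(\rvy)\big)$, where $\bar q(\rvy,i)\coloneqq\mathbb{E}\big[\mathbf{1}[\rvx^i=\rvy^i]\mid \rvy,i\big]=\Pr[\rvx^i=\rvy^i\mid\rvy,i]$. It then remains to evaluate $\bar q(\rvy,i)$ and to minimize $\mathrm{BCE}$ pointwise over $p=g_\phi^i(\rvy)$.

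To evaluate $\bar q(\rvy,i)$ I would exploit the Markov structure of the sampling process in Algorithm~\ref{alg: adapter fine-tuning}: $\rvx$ enters everything downstream only through $\rvz$ (the set $\mathcal{S}$ is selected from $\rvz$; the clean tokens filling in $\rvy$ at positions $\mathcal{S}$ are drawn from $f_{\mathrm{sg}(\theta)}(\cdot\mid\rvz)$ independently of $\rvx$; and $i$ is chosen from $(\rvy,\mathcal{S})$), so $\rvx\perp(\mathcal{S},\rvy,i)\mid\rvz$. Conditioning additionally on $\mathcal{S}$---equivalently on $\rvz=\rvy\oplus\mask_\mathcal{S}$, a deterministic function of $(\rvy,\mathcal{S})$---the law of $\rvx^i$ is precisely the MDM unmasking posterior, so $\mathbb{E}[\mathbf{1}[\rvx^i=\rvy^i]\mid\rvy,i,\mathcal{S}]=p(\rvx^i=\rvy^i\mid\rvy\oplus\mask_\mathcal{S})$. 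Averaging over the posterior $\mathcal{S}\sim\Pi(\cdot\mid\rvy,i)$ by the tower rule gives $\bar q(\rvy,i)=\mathbb{E}_{\mathcal{S}\sim\Pi(\cdot\mid\rvy,i)}[p(\rvx^i=\rvy^i\mid\rvy\oplus\mask_\mathcal{S})]$. Finally, $p\mapsto\mathrm{BCE}(\bar q(\rvy,i),p)$ is strictly convex on $(0,1)$ with unique minimizer $p=\bar q(\rvy,i)$ (with the boundary convention $0\log 0=0$ when $\bar q\in\{0,1\}$; note $\bar q(\rvy,i)$, a convex combination of values in $[0,1]$, lies in $[0,1]$, the codomain of $g_\phi$), so $\mathcal{L}(\phi)$ is uniquely minimized at $g_{\phi^\star}^i(\rvy)=\bar q(\rvy,i)$, which is the asserted formula.

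The only real obstacle is conceptual rather than computational: carefully justifying the conditional independence $\rvx\perp(\mathcal{S},\rvy,i)\mid\rvz$, i.e.\ checking that none of the intermediate steps---selecting $\mathcal{S}$, drawing the stop-gradient completions from $f_\theta$, and choosing the index $i$---reveal anything about $\rvx$ beyond what is already encoded in $\rvz$. Once that is in place, the remainder is the same one-line marginalization as in Proposition~\ref{prop:prism_main}, with $\mathcal{S}$ marginalized over its posterior instead of being deterministic.
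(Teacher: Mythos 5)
Your proposal is correct and follows essentially the same route as the paper's proof: both condition on $(\rvy,i)$, use the Markov structure of the sampling chain $\rvx\to\rvz\to(\rvy,i)$ to identify the conditional law of $\rvx^i$ given $(\rvy,i,\mathcal{S})$ as the unmasking posterior $p(\rvx^i=\cdot\mid\rvy\oplus\mask_\mathcal{S})$, marginalize $\mathcal{S}$ over its posterior $\Pi(\cdot\mid\rvy,i)$ so that the target probability becomes the averaged quantity $\bar q(\rvy,i)$, and then minimize the binary cross-entropy pointwise. Your explicit justification of the conditional independence $\rvx\perp(\mathcal{S},\rvy,i)\mid\rvz$ is, if anything, slightly more careful than the paper's one-line appeal to the Markov chain.
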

\begin{proof}
Before proving this, we clarify some notations. The notation $\rvy\oplus\mask_\mathcal{S}$ extends the notation $\rvy\oplus\mask_i$ from Section~\ref{sec:prism_theory} into the case where we mask multiple tokens, more precisely, $\rvy\oplus\mask_{\{i\}}$.

Also, note that the posterior distribution $\mathcal{S} \sim \Pi(\cdot\mid \rvy,i)$ is naturally induced from the defined joint distribution. To clarify, this posterior inherits a dependence on the rule that we choose the index set $\mathcal{S}$ during the fine-tuning (and also $f_\theta$). This also satisfies the following property: for a fixed $(\rvy,i)$, sample $\mathcal{S}$ from this posterior distribution $\Pi(\cdot\mid \rvy,i)$, and then the desired $\rvx$'s per-position posterior follows $v\sim p(x^i=\cdot\,|\,y\oplus\mask_\mathcal{S})$. This property holds since the sampling process of $(\rvx,\rvz,(\rvy,i))$ forms a Markov chain--$\rvz$ is conditioned on $\rvx$ and $(\rvy,i)$ is conditioned on $\rvx$.

By using this property, we now complete the proof; the remaining parts are the same as Proposition~\ref{prop:prism_main}.
\begin{align*}
     \mathcal{L}(\phi) = &\mathbb{E}_{\rvx,\rvz,(\rvy,i)}\left[{\mathrm{BCE}(\,\mathbf{1}[\rvx^i=\rvy^i]\, , \,{\xxgreen{g_\phi^i(\rvy)}}\,)}\right] \\
     =& \mathbb{E}_{y,i}\mathbb{E}_{\mathcal{S}\sim\Pi(\cdot\mid \rvy,i)}\mathbb{E}_{v\sim p(\rvx^i=\cdot \,|\,\rvz),\rvz=\rvy\oplus\mask_\mathcal{S}}\left[\mathrm{BCE}(\,\mathbf{1}[v=\rvy^i]\, , \,{\xxgreen{g_\theta^i(\rvy)}}) \right] \\
     = & \mathbb{E}_{y,i}\mathbb{E}_{\mathcal{S}\sim\Pi(\cdot\mid \rvy,i)}\left[-\xxpurple{q}\log(\xxgreen{g_\phi^i(\rvy)}) -(\xxpurple{1-q})\log(\xxgreen{1-g_\phi^i(\rvy)}) \right],
\end{align*}
where $\xxpurple{q}=p(\rvx^i=\rvy^i\mid \rvy\oplus\mask_\mathcal{S})$. Therefore, the expectation over $\mathcal{S}\sim\Pi(\cdot\mid \rvy,i)$ \emph{absorbs} into $\xxpurple{q}$, finally derived to our desired form;
\begin{align*}
    \mathcal{L}(\phi)&= \mathbb{E}_{\rvx,\rvz,(\rvy,i)}\left[{\mathrm{BCE}(\,\mathbf{1}[\rvx^i=\rvy^i]\, , \,{\xxgreen{g_\phi^i(\rvy)}}\,)}\right]= \mathbb{E}_{y,i}\left[-\xxpurple{q'}\log(\xxgreen{g_\phi^i(\rvy)}) -(\xxpurple{1-q'})\log(\xxgreen{1-g_\phi^i(\rvy)}) \right],\\
     \xxpurple{q'}&=\mathbb{E}_{\mathcal{S} \sim \Pi(\cdot\mid \rvy,i)}\left[p(\rvx^i=\rvy^i\mid \rvy\oplus\mask_\mathcal{S})\right].
    \end{align*}
\end{proof}

\textbf{Discussion on a new notion of per-token quality.}
Proposition~\ref{prop:prism_kmask_hiddenS} guarantees that the minimizer of the extended PRISM loss satisfies
\begin{equation*}
  \xxgreen{g_{\phi^\star}^i(\rvy)}
=\mathbb{E}_{\mathcal{S} \sim \Pi(\cdot\mid \rvy,i)}\left[p(\rvx^i=\rvy^i\mid \rvy\oplus\mask_\mathcal{S})\right].
\end{equation*}
Note that by construction, we always have $i\in\mathcal{S}$. This encompasses the $k=1$ case in Section~\ref{sec:prism_theory}; When $k=1$, the posterior distribution $\Pi(\cdot\,|\,\rvy,i)$ collapses to a point mass at $\mathcal{S}=\{i\}$, recovering Proposition~\ref{prop:prism_main}. 

We emphasize that this extended definition still captures our \emph{intended} notion of per-token quality—namely, the (posterior) likelihood of $\rvy^i$ given the rest of the context, here defined under a certain distribution over $\rvy\oplus\mask_\mathcal{S}$. Consequently, a model fine-tuned with the (extended) PRISM loss retains the capacity to identify low-quality tokens, while allowing $k>1$ to average over multiple plausible masked-context views of $\rvy$.

\textbf{$f_\theta$'s effect on PRISM.} We show in Proposition~\ref{prop:prism_main} and Proposition~\ref{prop:prism_kmask_hiddenS} that, the design of (b), in which we use $f_\theta$ to obtain $\rvy$, does not directly influence our theoretical results. To clarify, when $k=1$, the minimizer $g_{\phi^\star}^i(\rvy)\;=\;p\!\left(\rvx^i=\rvy^i\,\middle|\,\rvy\oplus\mask_i\right)$ remains fully independent from $f_\theta$ (or the rule for choosing $i$), and when $k>1$, the only change is indirect: $f_\theta$ (and the rule for choosing index set $\mathcal{S}$ during PRISM fine-tuning) affects the posterior over masked sets $ \Pi(\cdot\mid \rvy,i)$.

Practically, $f_\theta$ \emph{shapes} the training distribution of samples $(\rvy,i)$ 
observed during PRISM fine-tuning, potentially aligning it with those encountered at inference. Since the PRISM loss supervises $\rvy^i$, which is sampled from $f_\theta$, it is likely to be seen at the inference time (since we also use $f_\theta$ during inference to sample clean tokens). We hypothesize that this alignment improves PRISM fine-tuning, and we revisit this point with an ablation on OpenWebText (Appendix~\ref{appendix:f_theta_discussion}).

\newpage
\section{Employing PRISM at inference}
In this section, we provide details and pseudocode of inference algorithms that we use for our experiments. We first recall from Section~\ref{sec:prism_inference} how PRISM fine-tuned model ($f_\theta,g_\theta$) is used at inference: At each step $t_\ell$, we obtain $\rvx_{t_{\ell+1}}$ through the following steps.
\begin{itemize}[leftmargin=17pt, noitemsep, parsep=3pt, topsep=0pt]
    \item[(a)] Choose a subset of masked tokens $\mathcal{S}$ to \coloredhl{xblue}{unmask}.
    \item[(b)] Choose a subset of clean tokens $\mathcal{T}$ to \coloredhl{xxgreen}{remask} with the lowest quality scores $g_\theta(\rvx_{t_{\ell}})$.
    \item[(c)] For each $i\in\mathcal{T}$, \coloredhl{xxgreen}{remask} $\rvx_{t_\ell}^i$ and for each $j\in\mathcal{S}$, \coloredhl{xblue}{unmask} $\rvx_{t_\ell}^j$ to a clean token sampled from $f_\theta^j(\cdot\,|\, \rvx_{t_\ell})\in\Delta(\mathcal{V})$.
\end{itemize}
We now provide the pseudocode for two different cases of PRISM implementation, depending on how we choose the sizes of the unmasking and remasking sets $|\mathcal{S}|$ and $|\mathcal{T}|$. We begin with the case where we stochastically assign $|\mathcal{S}|$ and $|\mathcal{T}|$ from binomial distributions (Algorithm ~\ref{alg:fixed_ratio_remasking}), corresponding to our experiments on OpenWebText (Section~\ref{sec:owt}).

\begin{algorithm}[h]
\caption{PRISM inference with assignments from binomial distributions}
\begin{algorithmic}[1]
\STATE \textit{Require:} Fine-tuned MDM with unmasking posterior model \xblue{$f_\theta$}, per-token quality head \xxgreen{$g_\theta$}, discretized time steps $\{t_\ell\}_{\ell=0}^N$,
sampling steps $N$, sequence length $L$, remasking token ratio  $\eta>0$.
\STATE Initialize $\rvx_1=\rvx_{t_1}\gets(\mask, \dots, \mask)$
\FOR{$\ell = 0$ to $N-1$}
    \STATE $K \sim \mathrm{Binom}(|\{i \mid \rvx_{t_\ell}^i\neq\mask\}|\,,\, \eta)$ and define $\mathcal{M}\gets\{i \mid \rvx_{t_\ell}^i=\mask\}$
    \IF{$K \leq |\mathcal{M}| < L-K$ and $l\geq l_{on}$}
        \STATE $|\mathcal{T}|\gets K$, $|\mathcal{S}|\gets \lceil \frac{L}{N_s}\rceil+K$
    \ELSE
        \STATE $|\mathcal{T}|\gets 0$, $|\mathcal{S}|\gets\lceil \frac{L}{N_s}\rceil$
    \ENDIF
    \STATE Sample $\mathcal{S}\subset \mathcal{M}$ according to the unmasking rule
    \STATE Sample $\mathcal{T} \subset \{i \mid \rvx_{t_\ell}^i\neq\mask\}$ with low-$|\mathcal{T}|$ indices of $\xxgreen{g_\theta^{\cdot}(x_{t_\ell})}$
    \STATE \xblue{Unmask} $\rvx^i_{t_{\ell}}$ to $v^i\sim \xblue{f_\theta^i(\cdot\,|\,\rvx_{t_\ell})}$ for each $i\in\mathcal{S}$
    \STATE \xxgreen{Remask} $\rvx^j_{t_{\ell}}$ for each $j\in\mathcal{T}$
    \STATE $\rvx_{t_{\ell+1}}\gets \rvx_{t_{\ell}}$
\ENDFOR
\STATE \textbf{return} $\rvx_{t_N}=\rvx_0$
\end{algorithmic}
\label{alg:fixed_ratio_remasking}
\end{algorithm}
Now, we state the algorithm where we pre-allocate $|\mathcal{S}|$ and $|\mathcal{T}|$ before inference (Algorithm~\ref{alg:fixed_number_remasking}) with fixed $K$, corresponding to our experiments on Sudoku~\ref{sec:sudoku} and LLaDA~\ref{sec:llada}.

\begin{algorithm}[h]
\caption{PRISM inference with fixed $K$}
\begin{algorithmic}[1]
\STATE \textit{Require:} Fine-tuned MDM with unmasking posterior model \xblue{$f_\theta$}, per-token quality head \xxgreen{$g_\theta$}, sampling steps $N$, sequence length $L$, discretized time steps $\{t_\ell\}_{\ell=0}^N$, number of remasking tokens $K$
\STATE Initialize $\rvx_1=\rvx_{t_1}\gets(\mask, \dots, \mask)$
\FOR{$l = 0$ to $N-1$}
    \STATE Define $\mathcal{M}\gets\{i \mid \rvx_{t_l}=\mask\}$
    \IF{$K \leq |\mathcal{M}| < L-K$ and $l\geq l_{on}$}
        \STATE $|\mathcal{T}|=K$, $|\mathcal{S}|=\lceil \frac{L}{N}\rceil+K$
    \ELSE
        \STATE $|\mathcal{T}|=0$, $|\mathcal{S}|=\lceil \frac{L}{N}\rceil$
    \ENDIF
     \STATE Sample $\mathcal{S}\subset \mathcal{M}$ according to the unmasking rule
    \STATE Sample $\mathcal{T} \subset \{i \mid \rvx_{t_\ell}^i\neq\mask\}$ with low-$|\mathcal{T}|$ indices of $\xxgreen{g_\theta^{\cdot}(x_{t_\ell})}$
    \STATE \xblue{Unmask} $\rvx^i_{t_{\ell}}$ to $v^i\sim \xblue{f_\theta^i(\cdot\,|\,\rvx_{t_\ell})}$ for each $i\in\mathcal{S}$
    \STATE \xxgreen{Remask} $\rvx^j_{t_{\ell}}$ for each $j\in\mathcal{T}$
    \STATE $\rvx_{t_{\ell+1}}\gets \rvx_{t_{\ell}}$
\ENDFOR
\STATE \textbf{return} $\rvx_{t_N}=\rvx_0$
\end{algorithmic}
\label{alg:fixed_number_remasking}
\end{algorithm}
\section{Experiment details and ablation studies}
\label{appendix:experiment_details}
This section provides additional experimental details.

\subsection{Omitted Results}
\label{appendix:owt_full_results}
Here, we present the comprehensive results on OpenWebText in terms of MAUVE score, Generative Perplexity (Gen PPL), and Entropy. The graph in Figure~\ref{fig:mauve-and-gen-ppl-plot} is derived from Table~\ref{tab:remdm-exp-owt}.
We select the subset $\mathcal{S}$ based on confidence scores during fine-tuning.

\begin{table*}[h]
\centering
\caption{Evaluation result for PRISM/baselines across $N$. The best MAUVE scores are \textbf{bolded}. Baseline results marked with $^\dagger$ are taken from \cite{wang2025remasking}.}
\label{tab:remdm-exp-owt}
\resizebox{\textwidth}{!}{%
\begin{tabular}{|l|ccccc|ccccc|ccccc|}
\hline
\multicolumn{1}{|c|}{Method} &
\multicolumn{5}{c|}{MAUVE ($\uparrow$)} &
\multicolumn{5}{c|}{Gen PPL. ($\downarrow$)} &
\multicolumn{5}{c|}{Entropy ($\uparrow$)} \\
\hline
& \textit{64} & \textit{128} & \textit{256} & \textit{512} & \textit{1024}
& \textit{64} & \textit{128} & \textit{256} & \textit{512} & \textit{1024}
& \textit{64} & \textit{128} & \textit{256} & \textit{512} & \textit{1024} \\
\hline
MDLM$^\dagger$ & 0.011 & 0.015 & 0.023 & 0.031 & 0.042 & 72.1 & 61.5 & 55.8 & 53.0 & 51.3 & 5.55 & 5.52 & 5.49 & 5.48 & 5.46 \\
ReMDM-conf & 0.009 & 0.015 & 0.022 & 0.037 & 0.051 & 90.3 & 74.2 & 66.0 & 52.5 & 48.0 & 5.60 & 5.57 & 5.54 & 5.49 & 5.44 \\
ReMDM$^\dagger$ & 0.016 & 0.057 & 0.216 & 0.350 & 0.403 & 60.4 & 42.5 & 30.5 & 21.1 & 28.6 & 5.51 & 5.43 & 5.34 & 5.21 & 5.38 \\
\rowcolor{xgreen!20}
PRISM (ours) & \textbf{0.132} & \textbf{0.278} & \textbf{0.419} & \textbf{0.434} & \textbf{0.510} & 29.4 & 23.9 & 21.6 & 20.6 & 20.2 & 5.37 & 5.32 & 5.29 & 5.27 & 5.25 \\
\hline
\end{tabular}
}
\vspace{-1.5ex}
\end{table*}

\subsection{Details on the remasking Strategy for OpenwebText}
\label{Appendix: owt inference ablations}
This section provides a detailed elaboration of the sampling strategies employed for the unconditional text generation in Section~\ref{sec:owt}. The main motivation for these interventions is to preserve sentence diversity, which is crucial for evaluated metrics.

\begin{itemize}[leftmargin=*,itemsep=0pt]
    \item \textbf{Staged Remasking}: We restrict the self-correction mechanism to be activated only after a specific step $\ell_{on}$.
    This restriction is motivated by the observation that low per-token quality scores in the initial sampling phase ($t>t_{\ell_{on}}$) are likely due to a lack of suffix context rather than token-level inaccuracies.
    Activating the correction mechanism only after a sufficient context prevents spurious modifications to an incomplete sequence.
    \item $\mathbf{\eta}$ \textbf{Inverse Scheduling}: Fixing $\eta$ was found to degrade the diversity as the $N$ increases.
    To mitigate this, an inverse scheduling rule was adopted, dynamically adjusting $\eta$ based on $N$.
\end{itemize}
To avoid complicated hyperparameter designs, we set these hyperparameters to be a function of $N$.
The remasking ratio and the remasking activation step are set to $\eta=\frac{2.56}{N}$ and $l_{on}=\lceil0.5N\rceil$, respectively. 

\subsection{Ablation on Fine-tuning Hyperparameters-$(k,n_y)$}
\label{appendix: k_discussion}
\begin{figure}[h]
  \centering
\includegraphics[width=\linewidth]{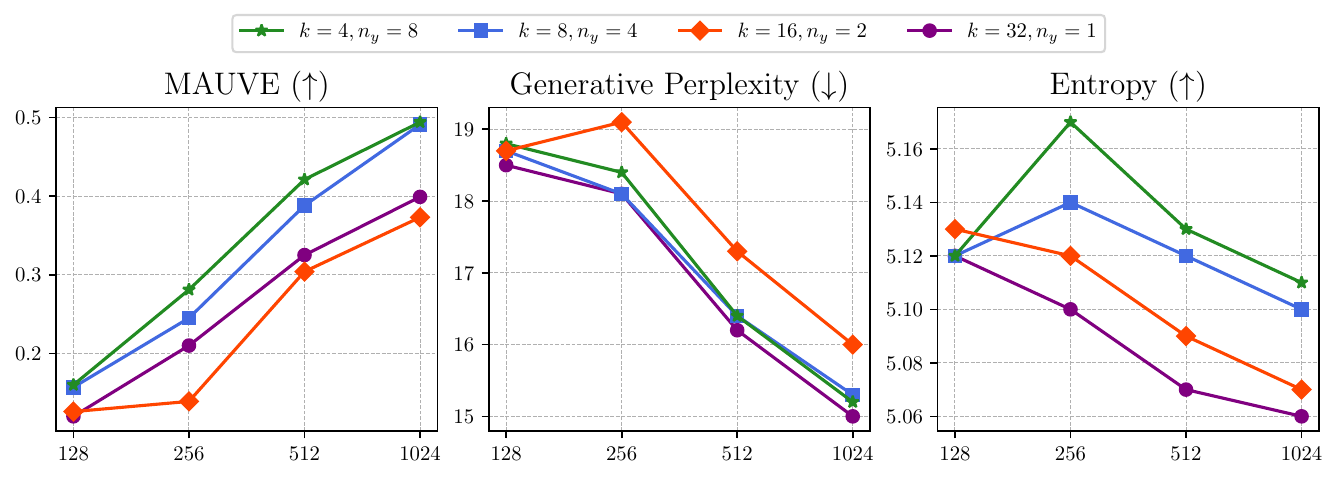}
  \caption{Ablation study on fine-tuning hyperparameters $k$ and $n_y$ while holding their product constant ($k \times n_y = 32$). Evaluations are conducted in ($128, 256, 512, 1024$) sampling steps.}
  \label{fig:ablation_kn_owt}
  \vspace{-1.0ex}
\end{figure}

In addition to $k$ (the number of tokens updated simultaneously to sample $\rvy$ from $\rvz$), we introduce a second hyperparameter, $n_y \in \mathbb{N}$, to improve data efficiency during the adapter's fine-tuning. As discussed in Section~\ref{sec:prism_empirical}, we can generate multiple distinct target sequences $\rvy$ from a single source sequence $\rvz$ by using different unmasking index sets $\mathcal{S}$. Therefore, $n_y$ denotes the number of unique unmasking sets applied per $\rvz$ in a single forward pass.

We conducted an ablation study to analyze how the interplay between $k$ and $n_y$ affects the generation performance.
We tested variants of $(k,n_y) \in \{(4,8), (8,4), (16,2), (32,1)\}$, thus remaining the total number of token updates per batch constant ($k \times n_y=32$) for a fair comparison.
All other configurations follow Table ~\ref{tab:configs}, except that we set nucleus $p=0.9$ to sample $\rvy$ from $\rvz$, and select $\mathcal{S}$ randomly during PRISM fine-tuning.

Our primary finding is that training the adapter with a smaller number of updating tokens (low $k$) results in a model that performs better at inference time, as measured by the MAUVE score.
This can be attributed to a train-test distribution mismatch induced by large $k$.
Specifically, updating many tokens simultaneously is likely to introduce joint dependency errors, increasing the chance of sampling a \emph{misleading} sequence $\rvy$.
This forces the adapter to train on a corrupted data distribution that is rarely encountered during inference time. Consequently, this mismatch leads to a poorly calibrated quality estimator, degrading the model's ability to perform effective self-correction.

\subsection{Ablation on Fine-tuning Hyperparameters-Nucleus sampling} \label{appendix:f_theta_discussion}
\begin{figure}[h]
  \centering
\includegraphics[width=\linewidth]{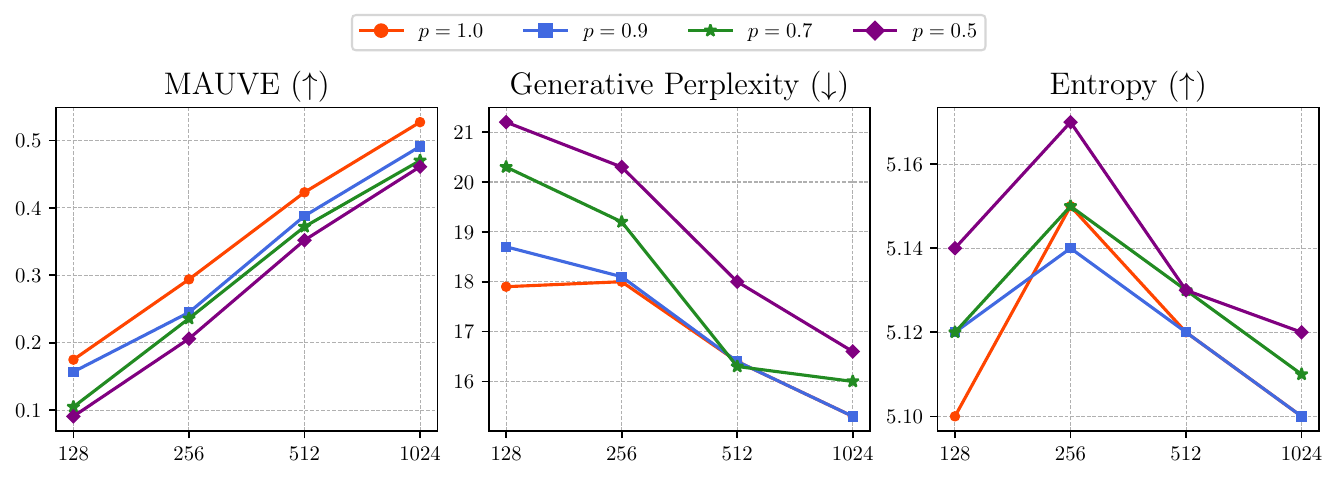}
  \caption{Ablation study on nucleus sampling $p$ used to generate target sequences during adapter fine-tuning. Evaluations are conducted in ($128, 256, 512, 1024$) sampling steps. }
  \label{fig:ablation_p_owt}
  \vspace{-1.0ex}
\end{figure}

We further investigate how the sampling distribution used to generate target sequences $\rvy$ affects the adapter's fine-tuning process.
Specifically, we analyze the impact of nucleus sampling probability $p$ that we use to sample a token $\rvy^i$ from $f_\theta^i(\cdot\,|\,\rvz)$, while selecting $\mathcal{S}$ randomly.
Our hypothesis is that a less restrictive sampling distribution (large $p$) is more beneficial for training the quality estimator than a shrunk one (small $p$).

Intuitively, a large $p$ induces a relatively diverse sample $\rvy$.
These sequences include not only high-probability tokens but also rare yet plausible alternatives. Training on these plausible-but-imperfect examples reduces exposure bias by forcing the model to calibrate its confidence across a wider range of outcomes, rather than only learning from its own single best predictions.
This process cultivates a more robust quality estimator capable of discerning subtle inaccuracies.
While this approach may introduce gradients with larger variance, it provides superior coverage of the complex distributions encountered during inference.
This ultimately leads to a more effective self-correction mechanism and improved text quality, as validated by the results in Fig.~\ref{fig:ablation_p_owt}.

\subsection{Quantitative analysis of PRISM: Calibration study}
We provide the calibration analysis result of PRISM in Figure~\ref{fig:calibration_openweb}.
\begin{figure}[H]
    \centering
    \includegraphics[width=1.0\textwidth]{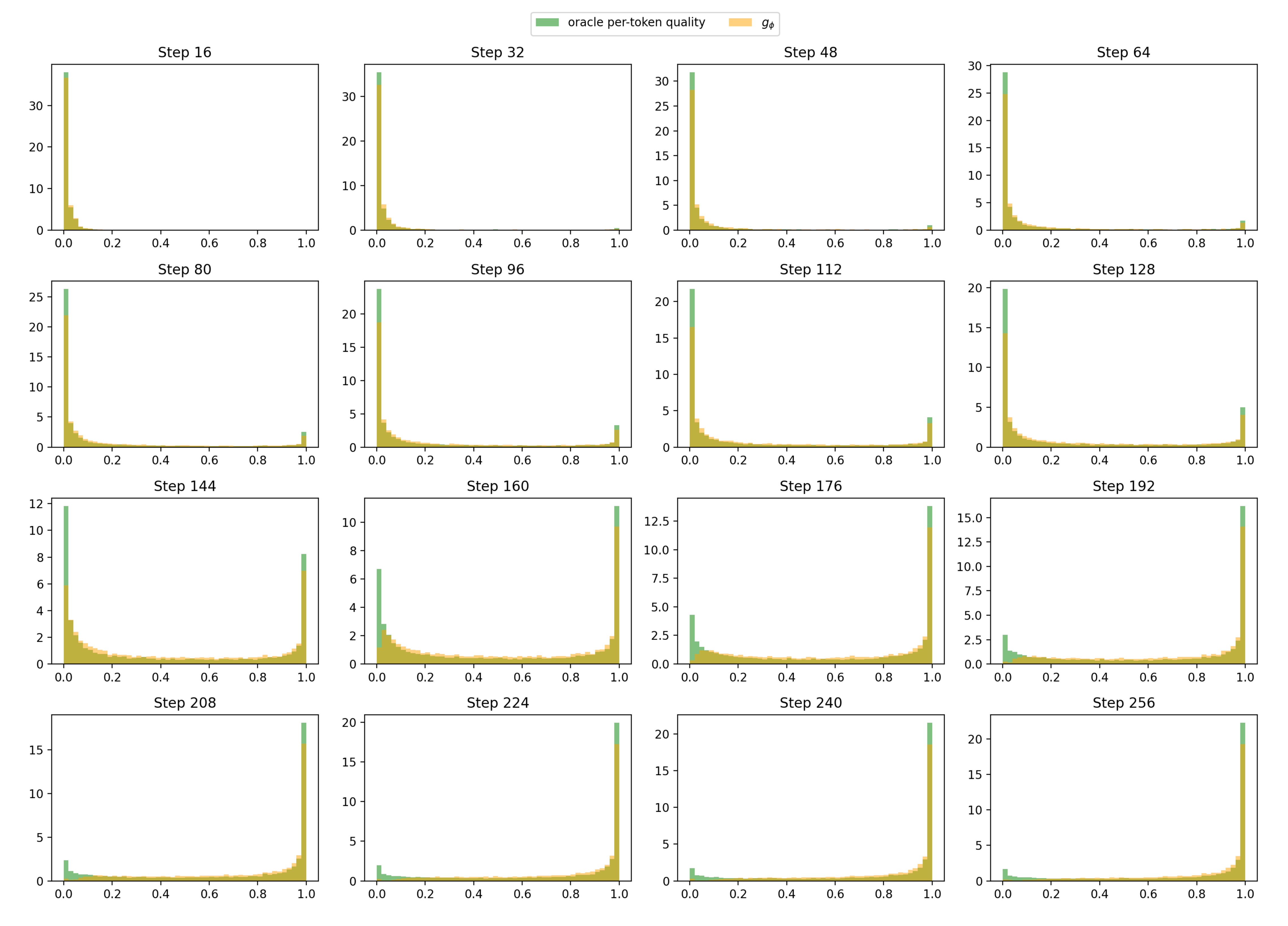}
    \caption{
    \rebuttal{Calibration of PRISM per-token quality on OpenWebText during generation without remasking with $256$ steps.
    For each timestep, tokens are binned by predicted quality $g_\theta^i(\mathbf{x}_t)$, and for each bin we plot the empirical probability that token $i$ matches the ground truth, estimated via $f_\theta(\cdot \mid \mathbf{x}_t \oplus \mathbf{m}_i)$, against the bin mean.}
    }
    \label{fig:calibration_openweb}
\end{figure}

\subsection{Learning likelihood instead of scalar}
\label{app:learning_likelihood}

\begin{table*}[h]
\centering
\caption{
Comparison between scalar-quality learning and likelihood-learning variants on OpenWebText.
PRISM learns a scalar per-token quality score, whereas likelihood-based variants learn the full per-position conditional likelihood.
Under the same 10k-step training budget, scalar-quality PRISM is more training-efficient and achieves better MAUVE score.
With longer training, likelihood learning becomes competitive, and entropy-aware remasking further improves MAUVE across decoding budgets.
}
\label{tab:likelihood_results}
\resizebox{\textwidth}{!}{%
\begin{tabular}{|l|ccccc|ccccc|ccccc|}
\hline
\multicolumn{1}{|c|}{Method} &
\multicolumn{5}{c|}{MAUVE ($\uparrow$)} &
\multicolumn{5}{c|}{Gen PPL. ($\downarrow$)} &
\multicolumn{5}{c|}{Entropy ($\uparrow$)} \\
\hline

& \textit{64} & \textit{128} & \textit{256} & \textit{512} & \textit{1024}
& \textit{64} & \textit{128} & \textit{256} & \textit{512} & \textit{1024}
& \textit{64} & \textit{128} & \textit{256} & \textit{512} & \textit{1024} \\
\hline

PRISM (scalar, 10k)
& 0.132 & 0.278 & 0.419 & 0.434 & 0.510 & 29.4 & 23.9 & 21.6 & 20.6 & 20.2 & 5.37 & 5.32 & 5.29 & 5.27 & 5.25 \\

Likelihood (10k)
& 0.083 & 0.141 & 0.179 & 0.214 & 0.185 & 18.8 & 15.5 & 14.1 & 13.5 & 13.1 & 5.18 & 5.13 & 5.10 & 5.08 & 5.07 \\

Likelihood (20k)
& 0.144 & 0.249 & 0.359 & 0.435 & 0.421 & 18.6 & 15.2 & 13.7 & 13.0 & 12.7 & 5.20 & 5.15 & 5.11 & 5.09 & 5.08 \\

Likelihood + Entropy (20k)
& \textbf{0.195} & \textbf{0.322} & \textbf{0.433} & \textbf{0.442} & \textbf{0.520} & 20.7 & 18.4 & 17.5 & 17.1 & 16.9 & 5.28 & 5.25 & 5.22 & 5.21 & 5.20 \\

\hline
\end{tabular}
}
\vspace{-1.5ex}
\end{table*}

Our formulation of PRISM learns a scalar per-token quality score $\xxgreen{g^i_\star(\rvy)}$.
This scalar is sufficient for identifying low-quality tokens while remaining lightweight and efficient to train.
A natural extension is to learn the full per-position likelihood instead:
\[
p(\rvx^i=\cdot \mid \rvy \oplus \mask_i)\in \triangle(\mathcal{V}).
\]
In this section, we study this likelihood-learning variant and analyze the trade-off it introduces for self-correction and remasking.
To keep notation consistent with the main text, we slightly overload $g_\phi^i$: in this
subsection, $g_\phi^i(\cdot \mid \rvy \oplus \mask_i)$ denotes a distribution-valued
likelihood head, whereas in the main body $g_\phi^i(\rvy)$ denotes a scalar per-token quality score.

\textbf{Likelihood Learning.}
The theoretical framework of PRISM extends directly to likelihood learning.
Specifically, replacing the BCE objective in Equation~\ref{eq:loss_prism} with a
cross-entropy objective gives

\begin{equation} \label{eq:ce_loss_prism}
    \mathcal{L}_{CE}(\phi)\coloneqq\mathbb{E}_{\rvx,\rvz, (\rvy,i)}\left[ -\log \xxgreen{g_\phi^i(\rvx^i \mid \rvy \oplus \mask_i)} \right],
\end{equation}
whose minimizer is the conditional likelihood $p(\rvx_i=\cdot \mid \rvy \oplus \mask_i)$.
Unlike PRISM, this variant does not require an additional scalar-quality head. Instead, we can fine-tune the MDM backbone
using the existing unmasking head, replacing $g_\phi$ in Equation~\ref{eq:ce_loss_prism} with
$f_\theta$. Compared to scalar-quality learning, likelihood learning provides richer information at each position, but it is also more expensive to learn as it requires learning a full vocabulary distribution rather than a single scalar.

To study this trade-off, we train likelihood-based variants under the same OpenWebText setup used in our main experiments, but with the likelihood objective above and without an additional scalar-quality head. Table~\ref{tab:likelihood_results} shows
that likelihood learning is effective, but underperforms scalar-quality PRISM
under the same 10k-step training budget. With longer training, however, the
likelihood-based variant becomes competitive with scalar-quality PRISM,
indicating that the likelihood objective is compatible with the PRISM framework
but requires more optimization.

\begin{figure}[t]
    \centering
    \includegraphics[width=1.0\textwidth]{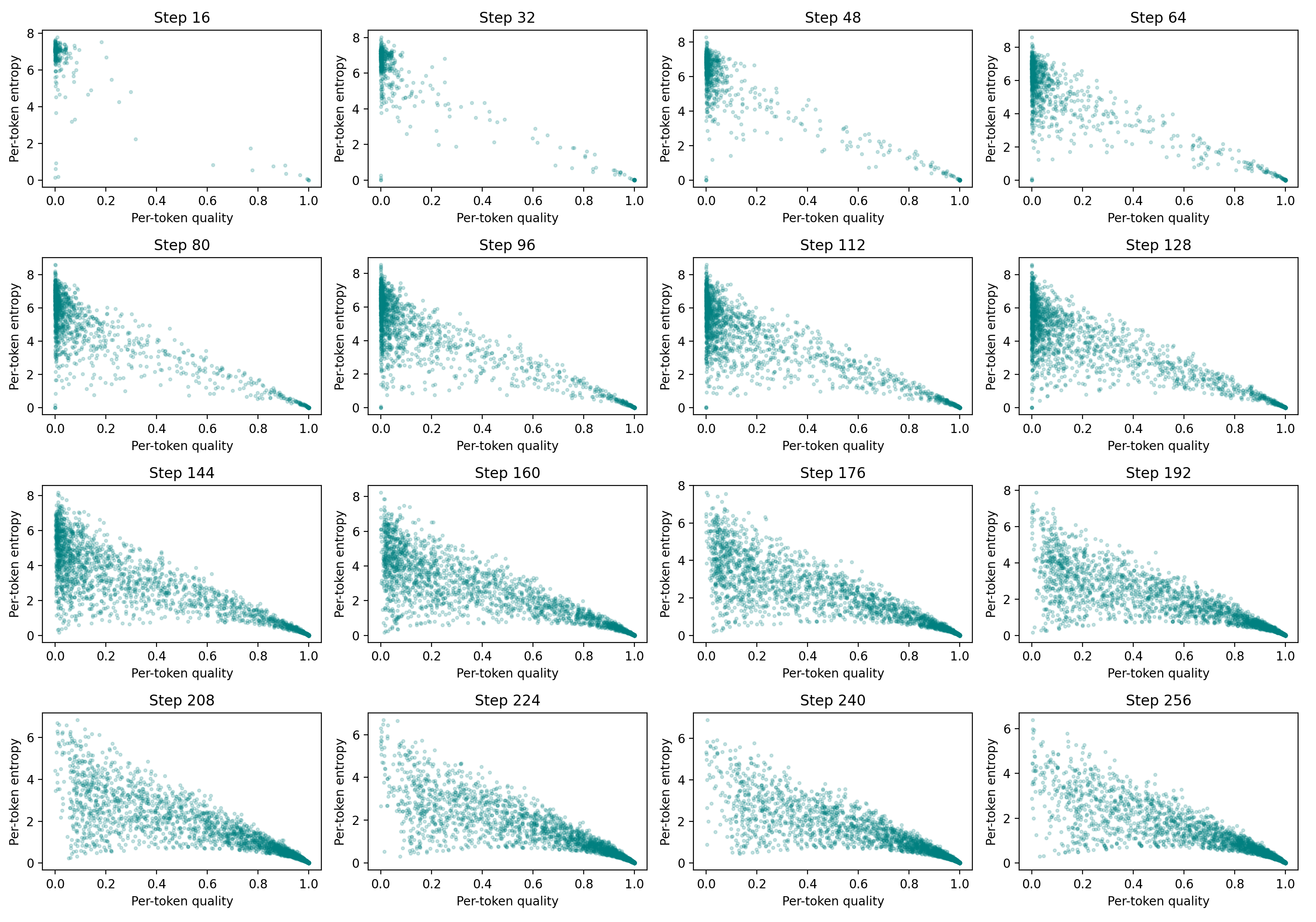}
    \caption{
    Relationship between per-token quality and per-position entropy during OpenWebText generation.
    Each point corresponds to a revealed token during inference.
    In this setting, remasking is activated only during the 128--256 step interval.
    Across denoising steps, many tokens exhibit low quality but high entropy, indicating positions where the current token may be plausible despite receiving a low scalar-quality score.
    }
    \vspace{-10pt}
    \label{fig:owt_scatter_1}
\end{figure}

\paragraph{Richer supervision from learned likelihoods.}
A key advantage of likelihood learning is that it exposes uncertainty information through the per-position entropy
\[
H^i_\theta(\rvx_t)
:=
-\sum_{v\in \mathcal{V}}
g_\theta^i(\rvx_t)[v]
\log g_\theta^i(\rvx_t)[v],
\]
This entropy captures the uncertainty of the token distribution at a revealed position $i$.
Low per-token quality can arise from two qualitatively different situations:
(1) the revealed token is genuinely erroneous, or (2) the token is semantically valid but belongs to a high-entropy position with many plausible alternatives.
Remasking tokens in the second case can be redundant or even harmful.
Figure~\ref{fig:owt_scatter_1} shows that low-quality, high-entropy tokens appear throughout
the denoising process, suggesting that scalar quality alone does not distinguish these two cases.

\begin{figure}[t]
    \centering
    \includegraphics[width=1.0\textwidth]{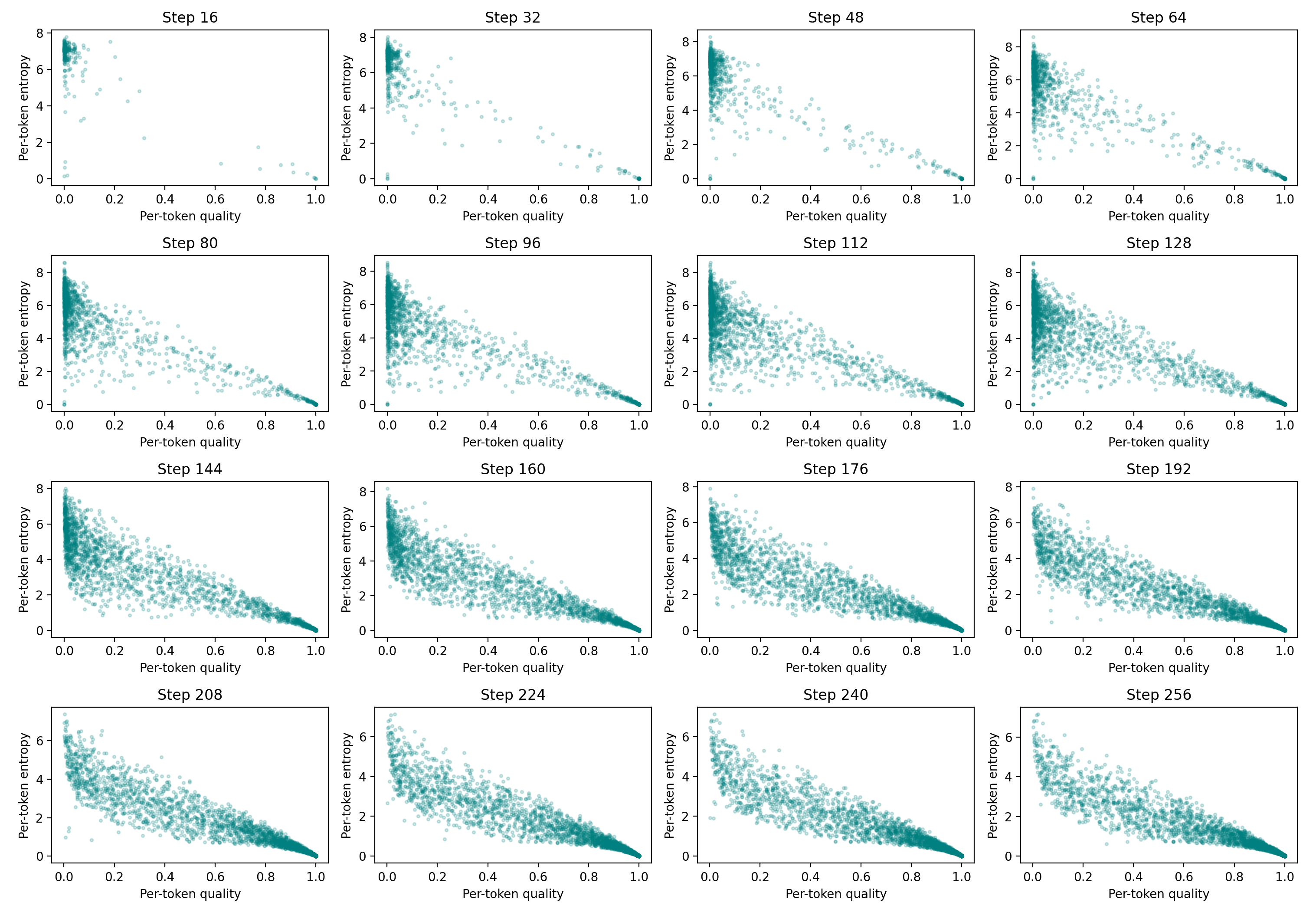}
    \caption{
    Effect of entropy-aware remasking during OpenWebText generation.
    In this setting, remasking is activated only during the 128--256 step interval.
    Using the entropy-adjusted score $s^i(\rvx_t)$ biases remasking toward low-quality, low-entropy tokens, which are more likely to correspond to genuine errors.
}
    \label{fig:owt_scatter_2}
\end{figure}

Motivated by this observation, we consider the following entropy-aware remasking score:
\[
s^i(\rvx_t)
=
g_\theta^i(\rvx_t)[\rvx_t^i]\cdot\exp(H^i_\theta(\rvx_t))
=\exp\left[\log g_\theta^i(\rvx_t)[\rvx_t^i]-\mathbb{E}_v\left[\log g_\theta^i(\rvx_t)[v]\right]\right],
\]
This score compensates for high-entropy positions and therefore avoids unnecessarily remasking tokens that have many plausible alternatives.
Using the same remasking schedule with $s^i(x_t)$, we observe that low-quality, low-entropy positions are more selectively remasked, as shown in Figure~\ref{fig:owt_scatter_2}.
Quantitatively, the entropy-aware score improves generation quality over the likelihood-only score in Table~\ref{tab:likelihood_results}, with especially large gains in the low-sampling-step regime.

That said, this richer supervision comes with a training-efficiency trade-off.
On OpenWebText, we observe an approximately $2\times$ training-efficiency gap between scalar-quality learning and likelihood learning.
It remains an open question how this gap evolves at larger scales, where learning a full likelihood may become more costly.

\subsection{Quantitative analysis of PRISM: Error correction study} \label{appendix:error_correction}

\paragraph{Case-by-case analysis.}
We provide a qualitative breakdown of PRISM’s self-correction behavior across representative HumanEval tasks. The code snippets are given in Table~\ref{tab:qualitative_self_correction}.
\begin{itemize}
  \item Syntax error correction (task id 56):
  The baseline model produces a syntactically invalid program due to an incomplete \texttt{elif} statement, which results in a \texttt{SyntaxError}. PRISM successfully detects this low-quality token and replaces the invalid branch with a valid \texttt{else} clause, yielding a syntactically correct and functionally valid implementation.
  \item Logical error correction (task id 5):
  The baseline implementation incorrectly appends the delimiter after every element, including the last one. PRISM revises the code to conditionally insert the delimiter only between elements, correcting the logical structure of the program while preserving the intended functionality.
  \item (Failure case) overcorrection (task id 25):
  Although the baseline solution is correct, PRISM rewrites the code into a more complex form that relies on repeated \texttt{min} and \texttt{max} operations. This transformation introduces an error when the input list becomes empty, illustrating a failure mode where PRISM overcorrects a correct but non-canonical solution.
  \item (Failure case) global reasoning error (task id 17):
  Both the baseline model and PRISM fail on this task. The core issue lies in the iteration structure: the program should iterate over multi-character musical symbols rather than individual characters. This type of global structural error is not captured by PRISM’s token-level quality estimation, highlighting a limitation of local self-correction.
\end{itemize}

\begin{table*}[h]
\centering
\small
\begin{tabular}{p{0.20\linewidth} p{0.37\linewidth} p{0.37\linewidth}}
\toprule
 & \textbf{Baseline} & \textbf{PRISM} \\
\midrule

Syntax error correction (task 56) &
\begin{lstlisting}[basicstyle=\ttfamily\small, breaklines=true]
def correct_bracketing(brackets):
    stack = []
    for bracket in brackets:
        if bracket == <:
            stack.append(bracket)
        elif:   # SyntaxError
            if not stack:
                return False
            stack.pop()
    return not stack
\end{lstlisting}
&
\begin{lstlisting}[basicstyle=\ttfamily\small, breaklines=true]
def correct_bracketing(brackets):
    stack = []
    for bracket in brackets:
        if bracket == <:
            stack.append(bracket)
        else:
            if len(stack) == 0:
                return False
            stack.pop()
    return len(stack) == 0
\end{lstlisting}
\\

\midrule
Logical error correction (task 5) &
\begin{lstlisting}[basicstyle=\ttfamily\small, breaklines=true]
def intersperse(numbers, delimeter):
    result = []
    for num in numbers:
        result.append(num)
        result.append(delimeter)
    return result
\end{lstlisting}
&
\begin{lstlisting}[basicstyle=\ttfamily\small, breaklines=true]
def intersperse(numbers, delimeter):
    result = []
    for num in numbers:
        if not result:
            result.append(num)
        else:
            result.append(delimeter)
            result.append(num)
    return result
\end{lstlisting}
\\

\midrule
Failure case (task 25) &
\begin{lstlisting}[basicstyle=\ttfamily\small, breaklines=true]
def strange_sort_list(nums):
    if not nums:
        return []
    nums.sort()
    result = []
    while nums:
        result.append(nums.pop(0))
        if nums:
            result.append(nums.pop(-1))
    return result
\end{lstlisting}
&
\begin{lstlisting}[basicstyle=\ttfamily\small, breaklines=true]
def strange_sort_list(nums):
    result = []
    while nums:
        min_val = min(nums)
        result.append(min_val)
        nums.remove(min_val)
        max_val = max(nums)
        result.append(max_val)
        nums.remove(max_val)
    return result
\end{lstlisting}
\\

\midrule
Global reasoning failure (task 17) &
\begin{lstlisting}[basicstyle=\ttfamily\small, breaklines=true]
def parse_music(s):
    out = []
    for note in s:
        if note == 'o':
            out.append(4)
        elif note == 'o|':
            out.append(2)
        elif note == '.|':
            out.append(1)
    return out
\end{lstlisting}
&
\begin{lstlisting}[basicstyle=\ttfamily\small, breaklines=true]
def parse_music(s):
    out = []
    for note in s:
        if note == 'o':
            out.append(4)
        elif note == 'o|':
            out.append(2)
        elif note == '.|':
            out.append(1)
    return out
\end{lstlisting}
\\

\bottomrule
\end{tabular}
\caption{Qualitative comparison of self-correction behavior.}
\label{tab:qualitative_self_correction}
\end{table*}

\subsection{Experiment Configurations}
\label{appendix:configs}
In Table~\ref{tab:configs}, we list the hyperparameter configurations for Sudoku, OWT, and LLaDA experiments.

\begin{table*}[h!]
\centering
\caption{Hyperparameter configurations for Sudoku, OWT, and LLaDA.}
\label{tab:configs}
\resizebox{0.8\textwidth}{!}{%
\begin{tabular}{lccc}
\toprule
\textbf{Hyperparameter} & \textbf{Sudoku} & \textbf{OWT} & \textbf{LLaDA} \\
\midrule
\multicolumn{4}{l}{\textbf{Base Model}} \\
\quad Architecture                & DiT & DiT & (non-causual) Transfomer\\
\quad Parameters                  & 28.6M & 169M & 8B\\
\quad Tokenizer                   & N/A & GPT-2 & LLaDA-8B-Instruct\\
\quad Sequence Length             & 81 & 1024 & 4096\\
\midrule
\multicolumn{4}{l}{\textbf{Fine-tuning}} \\
\quad Adapter Architecture        & linear & attention+linear & projection+linear \\
\quad Adapter Parameters          & 133K & 7.9M & 240M \\
\quad Optimizer                   & AdamW & AdamW & AdamW\\
\quad Learning Rate               & $3.0\times10^{-4}$ & $1.5\times10^{-4}$ & $1.0\times10^{-4}$ \\
\quad Weight Decay                & 0.0 & 0.0 & 0.1\\
\quad Gloabl Batch Size                  & 256 & 256 & 144\\
\quad $k$                         & 4 & 32 & 8 \\
\quad $n_y$                         & 1 & 1 & 1 \\
\quad Nucleus $p$                 & 1.0 & 1.0 & 0.0 \\
\quad $\lambda$             & 5.0 & 0.5 & 0.5 \\
\midrule
\multicolumn{4}{l}{\textbf{Sampling}} \\
\quad unmasking strategy          & random & random & semi-autoregressive \\
\quad $n_{\text{blocks}}$         & 1 & 1 & 32 \\
\quad Nucleus $p$                 & 1.0 & 0.9 & 0.0 \\
\quad Sampling Precision          & float64 & float64 & float32 \\
\quad $K$             & 4 & N/A & -- \\
\quad $l_{on}$       & 0 & $\lceil0.5N\rceil$ & -- \\
\quad $\eta$         & N/A & $\frac{2.56}{N}$ & N/A \\
\bottomrule
\end{tabular}
}
\vspace{-1.5ex}
\end{table*}

\subsection{Additional experimental details on LLaDA}
\label{appendix:llada}
Although Table~\ref{tab:configs} lists the basic configurations for our LLaDA experiments, we provide comprehensive details below.

\textbf{Attaching the adapter.}
As discussed in Section~\ref{sec:llada}, we freeze the LLaDA-8B-Instruct backbone and add (i) an auxiliary head to model per-token quality and (ii) a LoRA adapter on the qkv attention projections (rank $256$, dropout ratio $0.1$). This yields roughly $250$M trainable parameters in total.

\textbf{Dataset construction.}
For PRISM fine-tuning, we adopt the opc-sft-stage-2~\citep{Huang2024OpenCoderTO}, comprising $\approx 0.1$M challenging Python coding problems. Each example includes instructions and solutions. Thus, in the masking process, we do \emph{not} mask instruction tokens.

\textbf{Training configuration.}
We use AdamW~\citep{loshchilov2017decoupled} (learning rate \(1.0\times 10^{-4}\), weight decay \(0.01\), warmup ratio \(0.05\)) and use a cosine schedule with \(5\) cycles. Training for \(100\) epochs takes approximately \(30\) hours on \(12\) H100 GPUs.

\textbf{Inference details.}
For unmasking, we follow \citet{nie2025large} and employ semi-autoregressive inference: the sequence is partitioned into multiple blocks, and decoding proceeds from the leftmost block. We set the nucleus parameter to \(p=0.0\). These two choices are known to be important for competitive coding performance. 

For re-masking, across all baselines, we re-mask exactly $K_{\text{block}}$ tokens per block and schedule these re-masking operations at intermediate stages of block-wise inference (we observe degraded performance if re-masking is concentrated only at the beginning or end of a block). For a fair comparison, we report the best performance for PRISM and baselines (ReMDM, ReMDM-conf) over a sweep of $K_{\text{block}}$ at a fixed number of inference steps $N$: specifically,
\(K_{\text{block}}\in\{4,6\}\) for \(N=256\), (this is because the number of unmasking steps equals $8$ for $N=256$) and \(K_{\text{block}}\in\{8,10,12,14,16\}\) for \(N=512\) and \(N=1024\).

\end{document}